\newcommand{\listofappendices}{%
    \begingroup
    \cftsetindents{section}{0em}{2.5em} 
    \renewcommand{\cftsecfont}{\bfseries}
    \renewcommand{\cftsecpagefont}{\bfseries}
    \tableofcontents
    \endgroup
}
\theoremstyle{plain}
\newtheorem{theorem}{Theorem}[section]
\theoremstyle{definition}
\theoremstyle{remark}
\newcommand{\counterVariable}{chapter}
    \renewcommand{\counterVariable}{section}
\newtheorem*{track*}{Track}
\theoremstyle{definition}
\newtheorem{defn}{Definition}[\counterVariable]
\newcounter{example}
\renewcommand{\theexample}{\the\counterVariable.\arabic{example}}
\newcommand{\ie}{\emph{i.e.}}
\newsavebox{\largestimage}
\author{%
  Donato Crisostomi \\
  Sapienza University of Rome\\
  \texttt{crisostomi@di.uniroma1.it}  \\
  \And
  Marco Fumero \\
  Institute of Science and Technology Austria \\
  \texttt{fumero@di.uniroma1.it} \\
  \And
  Daniele Baieri \\
  Sapienza University of Rome \\
  \texttt{baieri@di.uniroma1.it} \\
    \And
  Florian Bernard \\
  University of Bonn \\
  \texttt{fb@uni-bonn.de} \\
  \And
  Emanuele Rodolà \\
  Sapienza University of Rome \\
    \texttt{rodola@di.uniroma1.it} \\
}
\title{$C^2M^3$: Cycle-Consistent Multi-Model Merging}
\begin{document}

\maketitle


\begin{abstract}
    In this paper, we present a novel data-free method for merging neural networks in weight space. 
    Differently from most existing works, our method optimizes for the permutations of network neurons globally across all layers. This allows us to enforce cycle consistency of the permutations when merging $n \geq 3$ models, allowing circular compositions of permutations to be computed without accumulating error along the path. 
    We qualitatively and quantitatively motivate the need for such a constraint, showing its benefits when merging sets of models in scenarios spanning varying architectures and datasets. We finally show that, when coupled with activation renormalization, our approach yields the best results in the task.
\end{abstract}

\section{Introduction}\label{sec:intro}
In the early days of deep learning, modes --- parameters corresponding to local minima of the loss landscape --- were considered to be isolated. Being depicted as points at the bottom of convex valleys, they were thought to be separated by high-energy barriers that made the transition between them impossible. However, a series of recent works have gradually challenged this perspective, first showing that modes can be actually connected by paths of low energy~\cite{Draxler2018-vr,Garipov2018-pz}, and later that, in some cases, these paths may even be linear~\cite{linear-mode-connectivity}. While linear paths in \cite{linear-mode-connectivity} could only be obtained after training the equally-initialized models for a few epochs, follow-up work~\cite{Entezari2021-me} speculated that the isolation of modes is a result of the permutation symmetries of the neurons. In fact, given a layer $W_\ell$ of a fixed network $A$, a large number of functionally-equivalent networks can be obtained by permuting the neurons of $W_\ell$ by some permutation $P$ and then anti-permuting the columns of the subsequent layer $W_{\ell+1}$. This intuition led to the conjecture that all modes lie in the same convex region of the parameter space, denoted as \emph{basin}, when taking into account all possible permutations of the neurons of a network.
This motivated a series of works trying to align different modes by optimizing for the neuron permutations~\cite{git-rebasin,repair,rebasin-implicit-sinkhorn,zip-it}. This has strong implications for model merging, where different models, possibly trained with different initializations~\cite{git-rebasin,rebasin-implicit-sinkhorn,model-fusion} or on different datasets and tasks~\cite{git-rebasin, zip-it}, are aggregated into a single one. 
In this work, we focus on the \emph{data-free} setting, aligning networks based on some similarity function that is computed directly over the neurons themselves. To this end, we follow \citet{git-rebasin} and formalize the problem of model merging as an assignment problem, proposing a new algorithm that is competitive with previous approaches while allowing global constraints to be enforced. 
\begin{figure}
    \centering

    \begin{tikzpicture}[scale=0.9]

        \begin{scope}[local bounding box=first]

            \def\r{2}

            \def\angleA{90}
            \def\angleB{-30}
            \def\angleC{210}

            \coordinate (A) at (\angleA:\r);
            \coordinate (B) at (\angleB:\r);
            \coordinate (C) at (\angleC:\r);

            \draw[-latex] (A) to[bend left] node[midway, right] {$P^{BA}$} (B);
            \draw[-latex] (B) to[bend left] node[midway, below] {$P^{CB}$} (C);
            \draw[-latex] (C) to[bend left] node[midway, left] {$P^{AC}$} (A);

            \filldraw (A) circle (1pt) node[above] {A};
            \filldraw (B) circle (1pt) node[below left] {B};
            \filldraw (C) circle (1pt) node[below right] {C};

        \end{scope}

        \begin{scope}[shift={(4.5,0)}, local bounding box=second]

            \coordinate (U) at (0,0);

            \def\r{2}

            \def\angleA{90}
            \def\angleB{-30}
            \def\angleC{210}

            \coordinate (A) at (\angleA:\r);
            \coordinate (B) at (\angleB:\r);
            \coordinate (C) at (\angleC:\r);

            \draw[-latex] (A) to[bend left] node[midway, right] {$(P^{A})^\top$} (U);
            \draw[-latex] (B) to[bend left] node[midway, below] {$(P^{B})^\top$} (U);
            \draw[-latex] (C) to[bend left] node[midway, left] {$(P^{C})^\top$} (U);

            \draw[-latex] (U) to[bend left] node[midway, left] {$P^{A}$} (A);
            \draw[-latex] (U) to[bend left] node[midway, above] {$P^{B}$} (B);
            \draw[-latex] (U) to[bend left] node[midway, right] {$P^{C}$} (C);

            \filldraw (A) circle (1pt) node[above] {A};
            \filldraw (B) circle (1pt) node[below left] {B};
            \filldraw (C) circle (1pt) node[below right] {C};

            \filldraw (U) circle (1pt) node[above right] {U};

        \end{scope}

    \end{tikzpicture}

    \caption{Cycle-Consistent Multi-Model Merging over three models $A, B, C$. \textbf{Left:} existing methods seek pairwise permutations that map between models; note that $P^{AC} \circ P^{CB}\circ P^{BA} \neq I$ in general, unless this is explicitly enforced. \textbf{Right:} our method computes permutations $P^A$, $P^B$, $P^C$ from each model to a {\em universe} $U$, such that a pairwise permutation $P^{BA}$ mapping $A$ to $B$ can be obtained as $P^{BA} = P^{B} (P^{A})^\top$. This way, cycle-consistency is enforced by design and $P^{AC} \circ P^{CB}\circ P^{BA} = I$.
    }
    \label{fig:cycle-cons-teaser}
\end{figure}

\paragraph{The problem} We investigate the problem of merging $n>2$ models, noting that existing pairwise approaches such as~\cite{git-rebasin} do not guarantee cycle consistency of the permutations (see \cref{fig:cycle-cons-teaser}).
As shown in \cref{tab:error-accumulation} and \cref{fig:cycle-comparison}, going from a model $A$ to a model $C$ through a model $B$, and then mapping back to $A$, results in a different model than the starting one --- specifically, the target model ends up in a completely different basin. More formally, for these methods, the composition of permutations along any cycle does \emph{not} result in the identity map. This also holds for the $n=2$ case, where the permutations optimized to align model $A$ to model $B$ are not guaranteed to be the inverse of those mapping $B$ to $A$; this makes the alignment pipeline brittle, as it depends on an arbitrary choice of a mapping direction.

\paragraph{Contribution} To address this issue, we introduce a novel alignment algorithm that works for the general case with $n \geq 2$ models, while \emph{guaranteeing} cycle consistency. The key idea is to factorize each permutation mapping $B$ to $A$ as $P^{AB} = P^{A} {{(P^B)}^\top}$, where ${{(P^B)}^\top}$ maps $B$ to a common space denoted as \emph{universe}, and $P^{A}$ maps from the universe back to $A$. This formulation ensures cycle consistency by design, as any cyclic composition of such permutations equals the identity.

Our numerical implementation is based on the Frank-Wolfe algorithm~\cite{frank-wolfe}, and optimizes for the permutations of \emph{all} the layers simultaneously at each step, naturally taking into account the inter-layer dependencies in the process. This desirable property is in contrast with other approaches such as~\citet{git-rebasin}, which seek the optimal permutations for each layer separately, and thus can not ensure coherence across the entire network.

We run an extensive comparison of our approach with existing ones both in the standard pairwise setting and in merging $n>2$ models, spanning a broad set of architectures and datasets. We then quantitatively measure the influence of architectural width, confirming the existing empirical evidence on its role in linear mode connectivity. Further, we assess how the performance of the merged model depends on the number of models to aggregate, and show that the decay is graceful. We finally analyze the basins defined by the models when mapped onto the universe, and investigate when and to what extent these are linearly connected.

Wrapping up, our contributions are four-fold:
\begin{itemize}
    \item We propose a new data-free weight matching algorithm based on the Frank-Wolfe algorithm~\cite{frank-wolfe} that globally optimizes for the permutations of all the layers simultaneously;
    \item We generalize it to the case of $n \geq 2$ models, enforcing guaranteed cycle-consistency of the permutations by employing a universal space as a bridge;
    \item We leverage the multi-model matching procedure for model merging, using the universal space as aggregation point;
    \item We conduct an extensive analysis showing how the merge is affected by the number of models, their width and architecture, as well as quantitatively measuring the linear mode connectivity in the universe basin.
\end{itemize}
\begin{figure}
    \begin{subfigure}{0.48\textwidth}
        \centering
            \includegraphics[width=\textwidth]{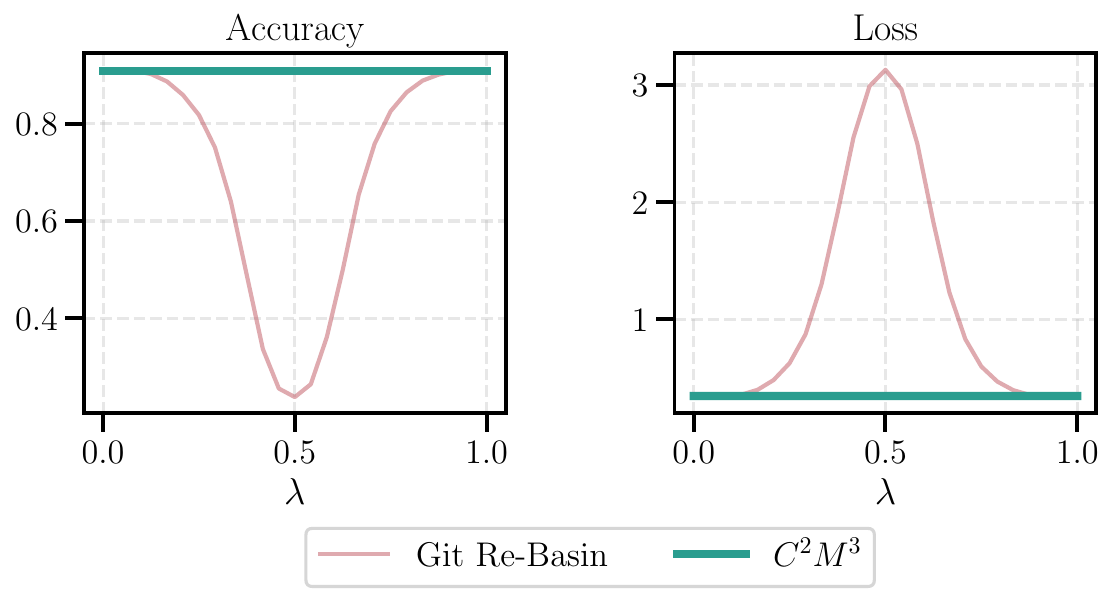}
        \caption{Loss and accuracy curves for a model $A$ and the model mapped back after a cyclic permutation. Models cyclically permuted with \texttt{Git Re-Basin} end up in a different basin than the one they started from.}
        \label{fig:cycle-comparison}
    \end{subfigure}
    \hfill 
    \begin{subfigure}{0.48\textwidth}
            \begin{center}
                \resizebox{\textwidth}{!}{%
                \begin{tabular}{ccc}
                    \toprule
                    Permutation                                  & Git Re-Basin & $C^2M^3$ \\
                    \midrule
                    $d\left(A,  P_{A\rightarrow B \rightarrow C \rightarrow A}(A)\right)$ & 41.07         & 0.0      \\
                    $d\left(B, P_{B\rightarrow C \rightarrow A \rightarrow B}(B)\right)$  & 41.18         & 0.0      \\
                    $d\left(C, P_{C\rightarrow A \rightarrow B \rightarrow C}(C)\right)$  & 41.19         & 0.0      \\
                    \bottomrule
                \end{tabular}
                }
            \end{center}
            \vspace{1.2cm}
            \caption{Accumulated error obtained when cyclically permuting models $A$, $B$ and $C$ as in \cref{fig:cycle-cons-teaser}. $P_{A\rightarrow B \rightarrow C \rightarrow A}$ refers to the composition $P_{AC} \circ P_{CB} \circ P_{BA}$ and $d(\cdot)$ is the $\ell_2$ loss.
            }
            \label{tab:error-accumulation}
    \end{subfigure}
    \caption{Existing methods accumulate error when cyclically mapping a model through a series of permutations, while $C^2M^3$ correctly maps the model back to the starting point.}
\end{figure}

Finally, to foster reproducible research in the field, we release a modular and reusable codebase containing implementations of our approach and the considered baselines.\footnote{\url{https://github.com/crisostomi/cycle-consistent-model-merging}}



\section{Background}\label{sec:background}
\paragraph{Mode connectivity} As introduced in \cref{sec:intro}, mode connectivity studies the geometry of the loss landscape with a particular interest on the regions corresponding to local minima. Following \citet{linear-mode-connectivity}, we assess the connectivity for two given modes by computing their loss barrier:
\begin{defn}{(\emph{Loss barrier})}\label{def:loss-barrier}
    Given two points $\Theta_A, \Theta_B$ and a loss function $\mathcal{L}$ such that $\mathcal{L}\left(\Theta_A\right) \approx$ $\mathcal{L}\left(\Theta_B\right)$, the \emph{loss barrier} is defined as 
    \begin{equation*}
        \max _{\lambda \in[0,1]} \mathcal{L}\left((1-\lambda) \Theta_A+\lambda \Theta_B\right)-\frac{1}{2}\left(\mathcal{L}\left(\Theta_A\right)+\mathcal{L}\left(\Theta_B\right)\right).
    \end{equation*}
\end{defn}
Intuitively, this quantity measures the extent of the loss increase when linearly moving from the basin of a mode to the other. When two modes share the same basin, the loss does not increase at all and results in a barrier close to zero.

\paragraph{Weight-space symmetries}
Following the rich line of works on mode connectivity and model merging~\cite{git-rebasin, Entezari2021-me, linear-mode-connectivity, rebasin-implicit-sinkhorn, zip-it}, we start from the essential insight of \emph{neuron permutation invariance} in neural networks.  Let us focus on the simple case of a Multi-Layer Perceptrons (MLP), where we can write the computation for an intermediate layer $W_\ell \in \mathbb{R}^{d_{\ell+1} \times d_{\ell}}$ as ${z}_{\ell+1} = \sigma\left({W}_{\ell} {z}_{\ell}+{b}_{\ell}\right)$,
with ${z}_\ell$ being the input at the $\ell$-layer and $\sigma$ denoting an element-wise activation function\@. For the sake of a clear exposure, we consider the bias ${b}_{\ell}=0$ in the following. If apply a permutation matrix $P \in \mathbb{P}$ to the rows of the $W_\ell$ matrix (\ie~the neurons), we obtain ${z}_{\ell+1}^{\prime} = \sigma \left( {P}{W}_{\ell} z_{\ell} \right)$.
Being an element-wise operator, $\sigma$ commutes with ${P}$ and can be neglected wlog. Since ${z}_{\ell+1}^{\prime}\neq{z}_{\ell}$ when ${P}\neq I$, we can still nullify the effect of the permutation by anti-permuting the columns of the subsequent layer for the inverse permutation of ${P}$, \ie~${P}^{\top}$. In fact,
\begin{equation*}
    {z}_{\ell+2}^{\prime} = {W}_{\ell+1}{P}^{\top} {z}_{\ell+1}^{\prime} =  {W}_{\ell+1}\underbrace{{P}^{\top} {P}}_{I} {W}_{\ell} {z}_{\ell} = {z}_{\ell+2}
\end{equation*}
making pairs of models that only differ by a permutation of the neurons de facto functionally equivalent.
Given the enormous number of such permutations, it stands to reason that the resulting weight-space symmetries act as a major factor in the isolation of modes.

\paragraph{Solving for the permutation}
Given the above considerations,~\citet{Entezari2021-me} speculated that all models end up in a single basin after having accounted for permutation symmetries. Assuming this to hold at least in practical cases, \citet{git-rebasin} proposed a simple algorithm to find the permutations matching two models by maximizing a local version of the sum of bi-linear problems:
\begin{align}
    \arg\max_{ \{P_\ell \in \mathbb{P}\}} ~\sum_{\ell=1}^L \langle W_\ell^{A}, P_\ell W_\ell^{B} P_{\ell-1}^T \rangle\,, \label{eq:weight-matching-obj}
\end{align}
with $P_0 := I$. Noting that \cref{eq:weight-matching-obj} is NP-hard, \citet{git-rebasin} tackle this problem by considering one layer at a time, relaxing the bi-linear problems to a set of linear ones that can be efficiently solved with any Linear Assignment Problem (LAP) solver, e.g., the Hungarian algorithm. 
This layer-wise linearization of the objective function, however, corresponds to high variance in the results that depend on the random order of the layers during optimization. See \cref{tab:git-rebasin-variance} for an empirical evaluation confirming this issue.
\paragraph{Renormalizing the activations} 
Notwithstanding the quality of the obtained matching, the loss barrier can still be high due to the mismatch in the statistics of the activations. In fact, \texttt{REPAIR}~\cite{repair} empirically shows the presence of a decay in the variance of the activations after the interpolation. They further show that the loss can be drastically reduced by ``repairing'' the mean and variance of the activations, forcing the statistics of the merged network to interpolate those of the endpoint networks. We refer the reader to \cref{app:repair} for an in-depth explanation.

\section{Approach}\label{sec:approach}
We now propose a novel algorithm to tackle the weight matching problem, first introducing its formulation in the pairwise case and then generalizing it to match and merge a larger number $n$ of models in a cycle-consistent fashion.

\paragraph{Pairwise matching}\label{subsec:pairwise-matching}
As we have seen, the NP-hardness of \cref{eq:weight-matching-obj} demands for a relaxation of the problem to be tackled. Differently from \citet{git-rebasin}, we opt to maintain the objective global with respect to the layers and instead iteratively optimize its linear approximation via the the Frank-Wolfe algorithm~\cite{frank-wolfe}. This procedure requires the computation of the gradient of \cref{eq:weight-matching-obj} with respect to each permutation ${P}_i$, thus we have to account for two contributions for each $\nabla_{P_i}$, \ie,~its gradient from permuting the rows of ${W}_i$ and the one from permuting the columns of ${W}_{i+1}$:
\begin{align}
    \nabla_{P_i}f & = \underbrace{W^A_i P_{i-1} {(W_i^B)}^\top}_{\text{from permuting rows}} + \underbrace{{(W^A_{i+1})}^\top P_{i+1}  W_{i+1}^B}_{\text{from permuting columns}}.
\end{align}
The Frank-Wolfe algorithm then uses the gradient to iteratively update the solution by linearly interpolating between the current solution and the projected gradient. We refer to \citet{lacoste2016convergence} for theoretical guarantees of convergence. The full algorithm is reported in \cref{app:pairwise-frankwolfe}.

\paragraph{Generalization to $n$ models}\label{subsec:generalized-matching}

In order to generalize to $n$ models, we jointly consider all pairwise problems
\begin{align}
    \arg\max_{P_i^{pq} \in \mathbb{P}} ~ \sum_{p=1}^n \sum_{\substack{q = 1 \\ q \neq p}}^n \sum_{i=1}^L \langle W_i^{p}, P_i^{pq} W_i^{q} {(P_{i-1}^{pq})}^\top \rangle, \label{eq:gen-frank-wolfe-unfactorized}
\end{align}
where the superscript $pq$ indicates that the permutations maps model $q$ to model $p$, with $P_0^{pq} := I$. 
In order to \emph{ensure cycle consistency by construction} we replace the quadratic polynomial by a fourth-order polynomial.
Dropping the layer subscript for the sake of clear exposure, we replace the pairwise matchings $P^{pq}$ in the objective of \cref{eq:gen-frank-wolfe-unfactorized} by factorizing the permutations into \emph{object-to-universe matchings} $P^{pq} = P^p \circ {(P^q)}^\top$ 
so that each model $q$ can be mapped back and forth to a common universe $u$ with a permutation and its transpose, allowing to map model $q$ to model $p$ by composition of ${(P^q)}^\top$ ($q\to u$) and $P^p$ ($u\to p$). 
This way, the objective of \cref{eq:gen-frank-wolfe-unfactorized} becomes
\begin{align}
\sum_{p \neq q}^n\sum_{i=1}^L \langle W_i^{p}, P_i^p {(P_i^q)}^\top W_i^{q} (P_{i-1}^p {(P_{i-1}^q)}^\top)^\top \rangle = \sum_{p \neq q}^n\sum_{i=1}^L \langle (P_i^p)^\top W_i^{p} P_{i-1}^p,  (P_i^q)^\top W_i^{q} P_{i-1}^q  \rangle. \label{eq:gen-frank-wolfe-obj}
\end{align}
As stated by \cref{thm:cycle-consistency}, the permutations we obtain using \cref{eq:gen-frank-wolfe-obj} are cycle consistent. We refer the reader to \citet{Bernard2021SparseQO} for the proof and a complete discussion of the subject. 

\begin{theorem}\label{thm:cycle-consistency}
    Given a set of $n$ models $p_0,\dots,p_n$ and object-to-universe permutations $P_i^{p_j}$ computed via \Cref{eq:gen-frank-wolfe-obj}, the pairwise correspondences defined by $P_i^{p_l p_j}={P_i^{p_l}}\circ \left(P_i^{p_j}\right)^{T}$ are cycle-consistent, \textit{i.e.}, 
    \begin{equation*}
    P_i^{p_1 p_j}\circ\cdots\circ P_i^{p_3 p_2} \circ P_i^{p_2 p_1} = I
    \end{equation*}
    for all layer indices $i$, $2\leq j\leq n$.
\end{theorem}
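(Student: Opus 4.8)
The plan is to show that the claimed identity is a purely structural consequence of the factorization $P_i^{p_l p_j} = P_i^{p_l}\circ\left(P_i^{p_j}\right)^{\top}$ together with the fact that every permutation matrix is orthogonal. Since the statement is made layerwise and the argument is identical for each layer index $i$, I would first fix $i$ and drop it from the notation, writing $Q^{j} := P_i^{p_j}$ for the object-to-universe permutation of model $p_j$, so that the pairwise correspondence is simply $P_i^{p_l p_j} = Q^{l}\left(Q^{j}\right)^{\top}$. The goal then reduces to checking that a particular matrix product telescopes.

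Next I would unfold the cyclic composition. Substituting the factorization into each factor of $P_i^{p_1 p_j}\circ\cdots\circ P_i^{p_3 p_2}\circ P_i^{p_2 p_1}$ and reading $\circ$ as matrix multiplication gives
\[
Q^{1}\left(Q^{j}\right)^{\top}\, Q^{j}\left(Q^{j-1}\right)^{\top}\,\cdots\, Q^{3}\left(Q^{2}\right)^{\top}\, Q^{2}\left(Q^{1}\right)^{\top}.
\]
Because each $Q^{k}$ is a permutation matrix, $\left(Q^{k}\right)^{\top} Q^{k} = I$, so every adjacent pair in the product collapses. Formally I would phrase this as an induction on the cycle length $m$: the base case $m=2$ reads $Q^{1}\left(Q^{2}\right)^{\top} Q^{2}\left(Q^{1}\right)^{\top} = Q^{1}\left(Q^{1}\right)^{\top} = I$ (which incidentally records that $P_i^{pq}$ and $P_i^{qp}$ are mutual inverses, settling the $n=2$ case), and the inductive step peels off the innermost pair $\left(Q^{2}\right)^{\top} Q^{2} = I$, reducing a cycle of length $m$ on $p_1,p_2,\dots,p_j$ to one of length $m-1$ on $p_1,p_3,\dots,p_j$. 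After all cancellations only $Q^{1}\left(Q^{1}\right)^{\top} = I$ remains, which is the claim; restoring the subscript $i$ and quantifying over $i$ completes the proof.

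The point worth emphasizing — rather than a genuine obstacle — is that cycle-consistency here holds \emph{by construction}: the objective in \cref{eq:gen-frank-wolfe-obj} is parameterized directly by the object-to-universe permutations $Q^{j} = P_i^{p_j}$, and the pairwise maps are \emph{defined} through the factorization, so no optimality or convergence property of the Frank-Wolfe solver enters the argument. The only substantive hypothesis is that the matrices entering the composition are exact elements of $\mathbb{P}$ rather than merely doubly-stochastic relaxations — orthogonality $\left(Q^{k}\right)^{\top} Q^{k}=I$ fails for the latter — so I would add a sentence noting that the iterates are projected/rounded onto $\mathbb{P}$ before the pairwise correspondences $P_i^{p_l p_j}$ are formed. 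With that in place the telescoping identity goes through verbatim, and the same reasoning shows that the cyclic product is invariant under the choice of starting model, since any cyclic rotation of the factors is again a telescoping product of the $Q^{j}$'s.
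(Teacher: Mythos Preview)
Your proposal is correct. The telescoping argument you give is exactly the standard proof: once the pairwise maps are \emph{defined} as $P_i^{p_l p_j}=P_i^{p_l}\bigl(P_i^{p_j}\bigr)^{\top}$ with each $P_i^{p_j}\in\mathbb{P}$ orthogonal, the cyclic product collapses to the identity by repeated use of $\bigl(P_i^{p_k}\bigr)^{\top}P_i^{p_k}=I$, and no property of the optimization is needed. Your remark that the iterates must be actual permutation matrices (not merely doubly-stochastic relaxations) for the orthogonality step to apply is a valid and useful caveat.

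As for comparison with the paper: the paper does not supply its own proof of this theorem at all --- immediately after stating it, it refers the reader to \citet{Bernard2021SparseQO} for the argument. Your write-up therefore fills in precisely what the paper outsources, and the telescoping/factorization reasoning you use is the same mechanism underlying the cited reference's ``by construction'' cycle-consistency guarantee.
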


Similarly to the pairwise case, the approach requires computing the gradients for the linearization. This time, however, each $\nabla_{P_{i}^A}f$ has four different contributions: one from permuting the rows of its corresponding layer, one from anti-permuting the columns of the subsequent layer, and two other contributions that arise from the symmetric case where $A$ becomes $B$. In detail,
\begin{equation}
    \nabla_{P^A_{\ell}} = \nabla_{P^A_{\ell}}^{\text{rows}}  + \nabla_{P^A_{\ell}}^{\text{cols}} + \nabla_{P^A_{\ell}}^{\text{rows}, \leftrightarrows} + \nabla_{P^A_{\ell}}^{\text{cols}, \leftrightarrows}
\end{equation}
where
\begin{align*}
    \nabla_{P^A_{\ell}}^{\text{rows}} & = W_\ell^A P_{\ell-1}^A(P_{\ell-1}^B)^\top (W_\ell^B)^\top P_{\ell}^B   &\nabla_{P^A_{\ell}}^{\text{cols}}                    = (W_{\ell+1}^A)^\top P^A_{\ell+1} \ (P^B_{\ell+1})^\top \ W_{\ell+1}^B \ P^B_{\ell} \\
    \nabla_{P^A_{\ell}}^{\text{rows}, \leftrightarrows} & = W_\ell^B P_{\ell-1}^B(P_{\ell-1}^A)^\top (W_\ell^A)^\top P_{\ell}^A &\nabla_{P^A_{\ell}}^{\text{cols}, \leftrightarrows}  = (W_{\ell+1}^B)^\top P^B_{\ell+1} \ (P^A_{\ell+1})^\top \ W_{\ell+1}^A \ P^A_{\ell}
\end{align*}
See~\cref{alg:frank-wolfe-generalized} for a complete description of the procedure.
\begin{algorithm}[H]
\caption{Frank-Wolfe for $n$-Model Matching }
\label{alg:frank-wolfe-generalized}
\begin{algorithmic}[1]
    \REQUIRE Weights of $n$ models $M_{i=1}^N$ \\tolerance $\epsilon > 0$
    \ENSURE Approximate solution to \cref{eq:gen-frank-wolfe-obj}
    \STATE $\mathbf{P}^k \gets $ identity matrices
    \REPEAT{}
        \FOR{$(p, q) \in [1, \dots, n] \times [1, \dots, n]$} 
            \FOR{$i=1$ to $L$}  
                \STATE $P_i^{p,k}, P_{i-1}^{p, k} \gets$ permutations over rows and columns of $W_i^p$ respectively
                \STATE $P_i^{q,k}, P_{i-1}^{q, k} \gets$ permutation over rows and columns of $W_i^q$ respectively
                \STATE $\nabla_{P_i^{p, k}} f \pm (W_{\ell+1}^p)^\top P^p_{\ell+1} \ (P^q_{\ell+1})^\top \ W_{\ell+1}^q \ P^q_{\ell}$
                \STATE $\nabla_{P_{i-1}^{p, k}} f \pm (W_{\ell+1}^p)^\top P^p_{\ell+1} \ (P^q_{\ell+1})^\top \ W_{\ell+1}^q \ P^q_{\ell}$
            \ENDFOR
        \ENDFOR
        \FOR{$P^k_i \in \mathbf{P}^k$}  
            \STATE $\Pi_i \gets \operatorname{LAP}(\nabla_{P_i^K} f)$ 
        \ENDFOR
        \STATE $\alpha \gets \text{line search}(f, \mathbf{P}^k, \mathbf{\Pi})$
        \FOR{$P^k_i \in \mathbf{P}^k$}
            \STATE $P_i^{k+1} = (1-\alpha) P_i^k + \alpha  \ \Pi_i$
        \ENDFOR
    \UNTIL $\|f(A, B, \mathbf{P}^{k+1}) - f(A, B, \mathbf{P}^{k})\| < \epsilon$
    \STATE \textbf{return} $\mathbf{P}^k$

\end{algorithmic}
\end{algorithm}

\begin{figure}
    \begin{subfigure}[b]{0.48\textwidth}
        \centering
        \includegraphics[width=\textwidth]{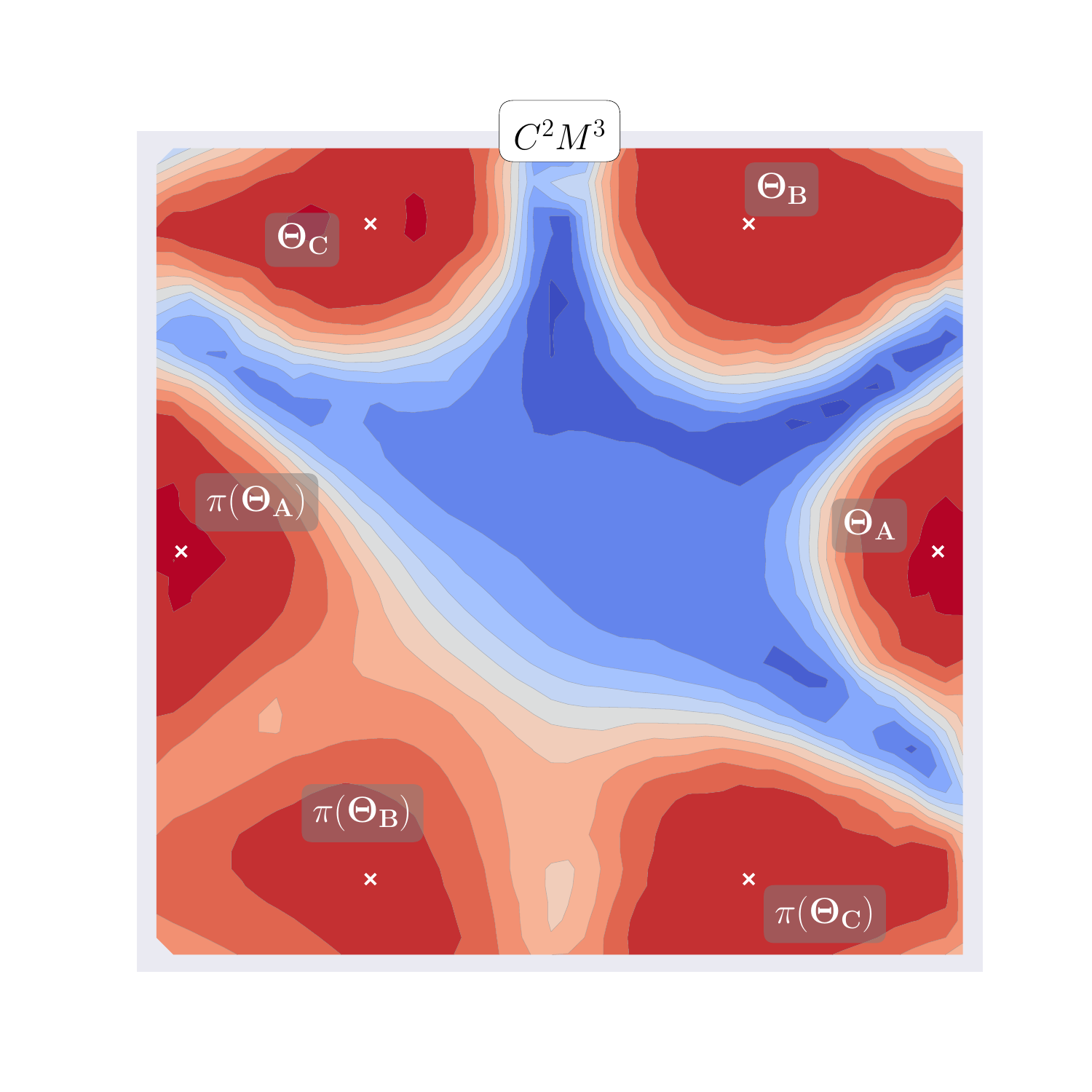}
        \label{fig:resnet_cifar_loss_contour}
        \vspace{-1cm}
        \caption{\texttt{ResNet20} over \texttt{CIFAR100}.}
    \end{subfigure}
    \hfill
    \begin{subfigure}[b]{0.48\textwidth}
        \centering
        \includegraphics[width=\textwidth]{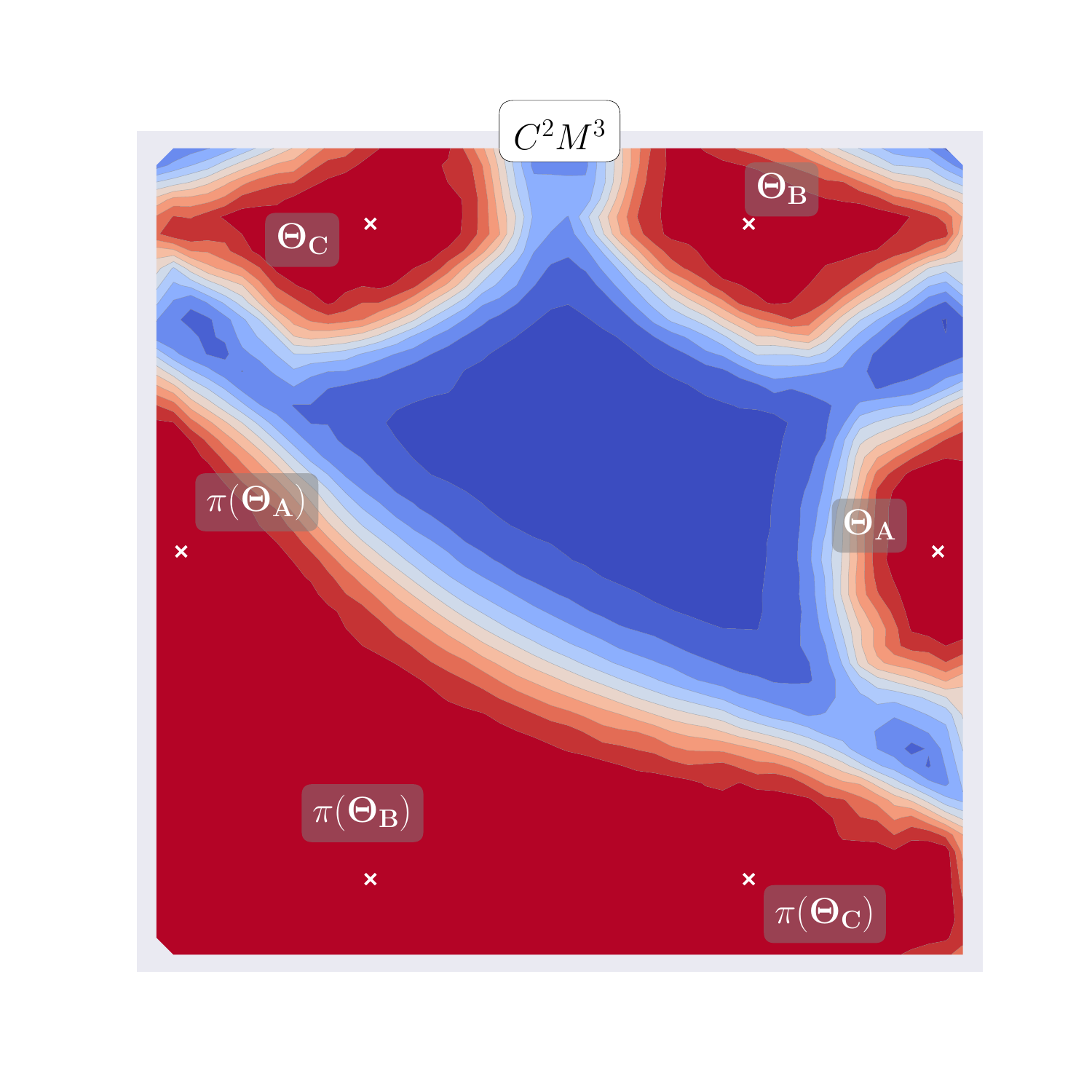}
        \label{fig:MLP_cifar_loss_contour}
        \vspace{-1cm}
        \caption{\texttt{MLP} over \texttt{MNIST}.}
    \end{subfigure}
    \caption{2D projection of the loss landscape when matching three modes $\Theta_A, \Theta_B, \Theta_C$; the models $\pi(\Theta_A), \pi(\Theta_B), \pi(\Theta_C)$ are their resulting images in the universe, and lie in the same basin. Red zones indicate low-loss regions (typically basins), while blue zones indicate high-loss ones. }
    \label{fig:cifar_loss_contour}
\end{figure}
\paragraph{Merging in the universe space}\label{subsec:merging-universe}
Looking at the loss landscape resulting from interpolating models in \cref{fig:cifar_loss_contour}, we see that the loss curves are much lower when the models are interpolated in the universe space. In fact, the originally disconnected modes end up in the same basin when mapped onto the universe, making it suitable to average the models. Therefore, our merging method aggregates the models by taking the mean of the weights in the universe space, as detailed in \cref{alg:ccmmm}. 
\begin{algorithm}
    \caption{$C^2M^3$: Cycle-Consistent Multi Model Merging}\label{alg:ccmmm}
    \begin{algorithmic}[1]
        \REQUIRE $N$ models $A_1, \dots, A_N$ with $L$ layers
        \ENSURE merged model $M$
        \STATE $\{P_1, \dots, P_N\} \gets $ Frank-Wolfe($M_1, \dots, M_N$) 
        \FOR{$i=1$ to $N$}
            \STATE $M_{i}^{\text{uni}} \gets \text{map\_to\_universe}(A_i, P_i)$
        \ENDFOR
        \STATE $M^{\text{uni}} \gets \frac{1}{N}\sum_{i=1}^N M_{i}^{\text{uni}}$
        \STATE \textbf{return} $M^{\text{uni}}$
    \end{algorithmic}
\end{algorithm}

\section{Experiments}\label{sec:exps}
We now evaluate the quality of our proposed framework both in matching models and in the subsequent merging operation. Approaches suffixed with a $\dagger$ indicate the application of \texttt{REPAIR}.

\paragraph{Matching and merging two models}\label{subsec:exp-pairwise-matching-brief}
\begin{wrapfigure}[17]{r}{0.4\textwidth}
    \vspace{-0.5cm}
    \centering
    \includegraphics[width=0.4\textwidth]{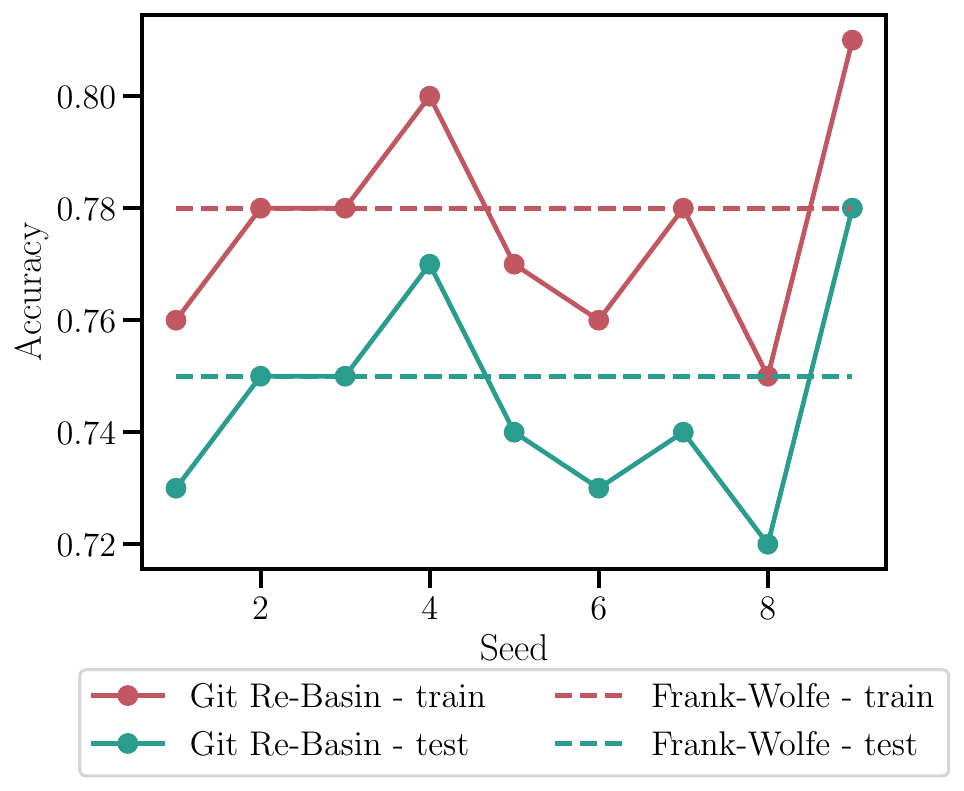}
    \caption{Accuracy of the interpolated model using \texttt{Git Re-Basin}~\cite{git-rebasin} over different optimization seeds.}
    \label{fig:git-re-basin-variance}
\end{wrapfigure}
As described in \cref{subsec:pairwise-matching}, our formalization can readily be used to match $n=2$ models. In this case, the energy is given by \cref{eq:weight-matching-obj} and the permutations are not factorized. We compare the performance of our approach against the \texttt{Git Re-Basin} algorithm~\cite{git-rebasin} and the \texttt{naive} baseline that aggregates the models by taking an unweighted mean on the original model weights without applying any permutation. In this setting, our method performs on par with the state-of-the-art. Differently from the latter, however, we do not depend on the random choice of layers, as the optimization is performed over all layers simultaneously. As presented in \cref{fig:git-re-basin-variance}, this results in \texttt{Git Re-Basin} exhibiting variations of up to $10\%$ in accuracy depending on the optimization seed, while our method shows zero variance. We refer the reader to \cref{subsec:exp-pairwise-matching} for a thorough evaluation of $C^2M^3$ over a set of different datasets and architectures. In summary, \emph{our approach is able to match two models with the same accuracy as the state-of-the-art, while being deterministic and independent of the random choice of layers}.

\paragraph{Matching and merging $n$ models}\label{subsec:merging-n-models}%
We now evaluate $C^2M^3$ in matching and merging $n$ models. The matching is given by the factorized permutations obtained by \cref{alg:frank-wolfe-generalized}. 
We compare against two baselines: the simple approach of naively averaging the weights without any matching, and the \texttt{MergeMany} approach proposed by~\citet{git-rebasin}. The latter is reported in \cref{appendix-a} for convenience.
As reported in \cref{tab:merge-many}, $C^2M^3$ obtains far superior results in terms of accuracy and loss in all considered settings, with accuracy gains as high as $+20\%$. Moreover, our approach natively yields cycle-consistent permutations: \cref{tab:error-accumulation} shows that \texttt{Git Re-Basin}~\cite{git-rebasin} accumulates significant error when computing the distance between the source model and the model obtained by applying a cyclic series of permutations, while our approach is able to perfectly recover the source model. This is further confirmed in \cref{fig:cycle-comparison}, where we show the loss and accuracy curves when interpolating between a model $A$ and the model mapped back after a cyclic permutation. Models cyclically permuted with Git Re-Basin end up in a different basin than the one they started from, while our cycle-consistent approach ensures that the target model is exactly the same as the source. Wrapping up, \emph{our approach matches and merges $n$ models with a significant improvement in performance over the state-of-the-art, while ensuring cycle-consistent permutations}.

\begin{wrapfigure}[15]{r}{0.35\textwidth}
    \vspace{-0.5cm}
    \centering
    \includegraphics[width=0.35\textwidth]{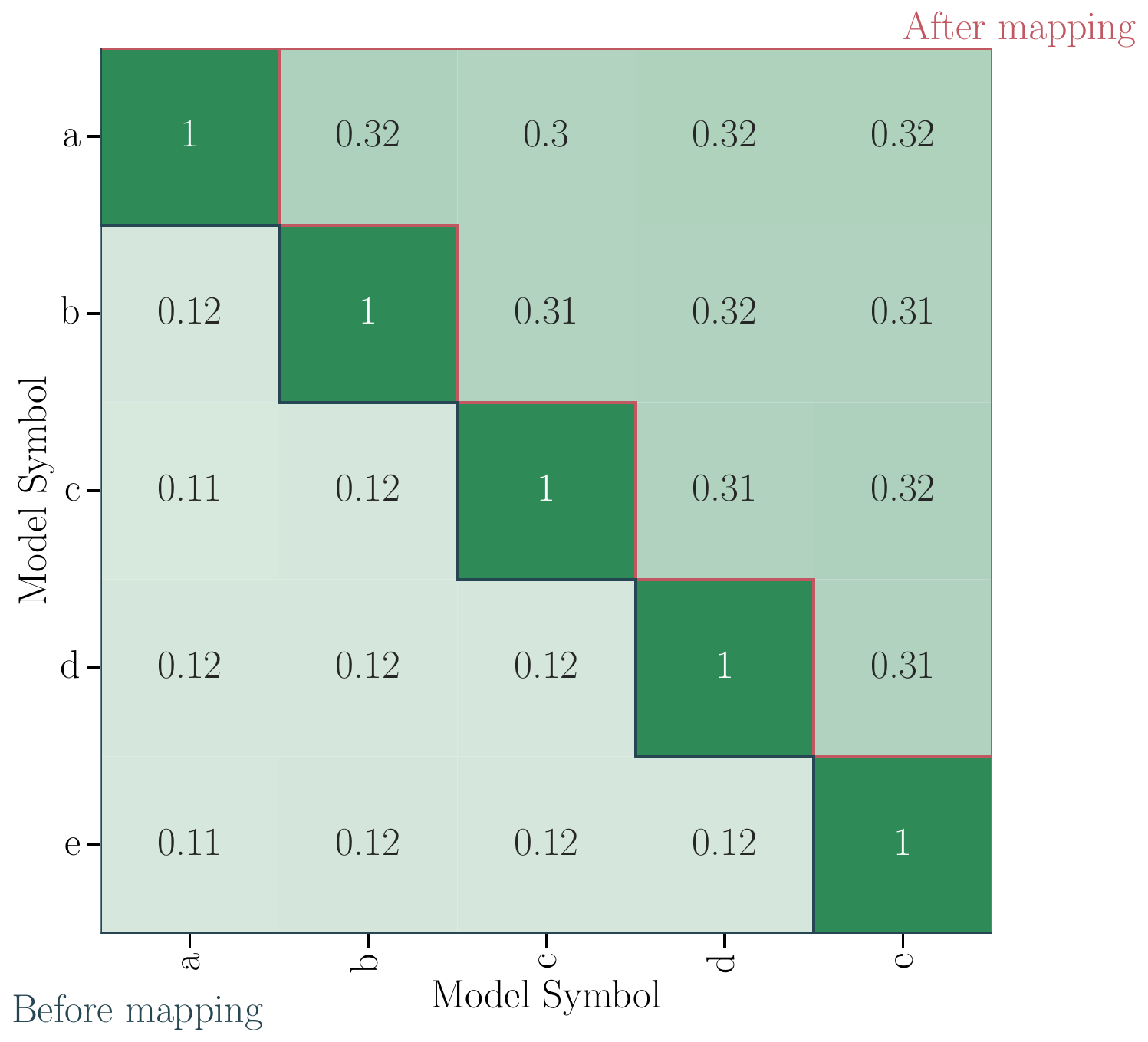}
    \caption{Cosine similarity of the weights of 5 \texttt{ResNet20} trained on \texttt{CIFAR10} with $2\times$ width.}
    \label{fig:weights-cos-similarities}
\end{wrapfigure}
\paragraph{Model similarity before and after mapping} 
As we can see in \cref{fig:weights-cos-similarities}, the cosine similarity of the weights of the models is $3\times$ higher after mapping the latter to the universe. This suggests that the initial quasi-orthogonality of models is at least partially due to neuron permutation symmetries. We also report in \cref{subsec:exp-weight-similarity} the similarity of the representations between pairs of models. Interestingly, the latter does not change before and after mapping to the universe, but only if we consider a similarity measure that is invariant to orthogonal transformations such as CKA~\cite{Kornblith2019-vr}. When using a measure that does not enjoy this property, such as the Euclidean distance, the representations become much more similar in the universe space. In short, \emph{the models are $3\times$ more similar in the universe space and the mapping affects the representations as an orthogonal transformation}.



\paragraph{Effect of activation renormalization} 
\begin{wrapfigure}[12]{l}{0.35\textwidth}
    \vspace{-0.45cm}
    \centering
    \includegraphics[width=.35\textwidth, trim=4cm 1.5cm 3cm 3cm, clip]{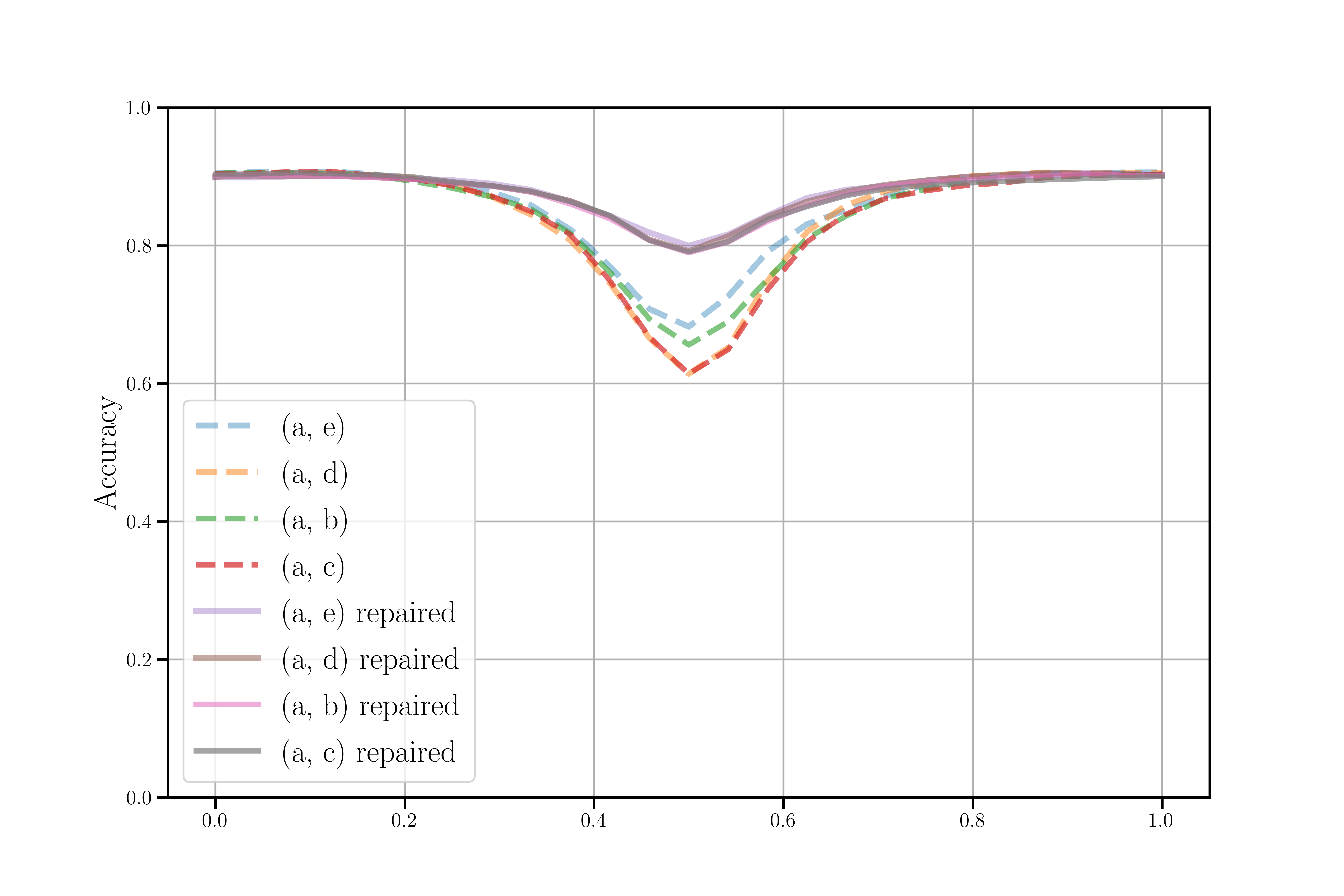}
    \caption{Interpolation curves of VGG models in the universe.}\label{fig:repair_vgg_universe}
\end{wrapfigure}%
Our empirical evidence also points out the benefits of the \texttt{REPAIR} operation~\cite{repair} that is performed after the merging.
In fact, the detrimental effect of model averaging on the activation statistics~\cite{repair} still applies when taking the mean of $n$ models instead of two. Our results clearly show the benefit of \texttt{REPAIR}, making it a key ingredient of our overall framework. Requiring meaningful interpolation endpoints to be effective, \texttt{REPAIR} has lower benefit when employed on the \texttt{MergeMany} algorithm of \citet{git-rebasin}. In fact, iteratively taking means of different random model subsets and aligning the left-out models to the mean is a more complex process than interpolating between some endpoint models. By taking the mean of models in the universe space, we are instead effectively interpolating between endpoint models that can be used for the computation of the statistics in \cref{eq:repair}. \Cref{fig:repair_vgg_universe} shows the benefit of using the repair operation on 5 \texttt{VGG} models trained on \texttt{CIFAR10} mapped to the universe space. Specifically we fix one model ``a'' and we linearly interpolate in the universe space with respect to the other models, measuring accuracy before and after applying \texttt{REPAIR}. Other than boosting performance, we observe that the latter reduces the variance over interpolation paths, resulting in the interpolation curves of all the models overlapping. Overall, \emph{using the models in the universe as meaningful endpoints to gather activation statistics, our approach can fully leverage activation renormalization techniques such as \texttt{REPAIR}}.

\begin{table}
    \begin{center}
        \resizebox{\textwidth}{!}{%
            \begin{tabular}{clcccccccccccccc}
                \toprule
                \multirow{4}{*}{\textbf{Matcher}} &                                                                                 & \multicolumn{4}{c}{\texttt{EMNIST}}                &                                                  & \multicolumn{4}{c}{\texttt{CIFAR10}} &                                                    & \multicolumn{4}{c}{\texttt{CIFAR100}}                                                                                                                                                                                                                                                                                                                                                                                            \\
                \cmidrule{3-6}                                                                                   \cmidrule{8-11}                                                                                                      \cmidrule{13-16}

                                                  &                                                                                 & \multicolumn{2}{c}{\textbf{Accuracy} $(\uparrow)$} & \multicolumn{2}{c}{\textbf{Loss} ($\downarrow$)} &                                      & \multicolumn{2}{c}{\textbf{Accuracy} ($\uparrow$)} & \multicolumn{2}{c}{\textbf{Loss} ($\downarrow$)}                                                           &               & \multicolumn{2}{c}{\textbf{Accuracy} ($\uparrow$)} & \multicolumn{2}{c}{\textbf{Loss} ($\downarrow$)}                                                                                                                                                                                               \\
                \cmidrule(lr){3-4}                                                                                  \cmidrule(lr){5-6}                                                                                                      \cmidrule(lr){8-9}                                                                                                      \cmidrule(lr){10-11}                                                                                                      \cmidrule(lr){13-14}                                                                                                      \cmidrule{15-16}
                                                  &                                                                                 & train                                              & test                                             & train                                & test                                               &                                                                                                            & train         & test                                               & train                                            & test          &                                                                                                             & train         & test          & train         & test          \\
                \midrule
                \texttt{Naive}                    & \parbox[t]{4mm}{\multirow{5}{*}{ \rotatebox[origin=c]{90}{\texttt{MLP} }}}      & 0.03                                               & 0.03                                             & 3.28                                 & 3.28                                               & \parbox[t]{4mm}{\multirow{5}{*}{ \rotatebox[origin=c]{90}{$\underset{2\times}{\text{\texttt{ResNet}}}$} }} & 0.10          & 0.10                                               & 3.07                                             & 3.07          & \parbox[t]{4mm}{\multirow{5}{*}{ \rotatebox[origin=c]{90}{$\underset{4\times}{\text{\texttt{ResNet}}}$} }}  & 0.01          & 0.01          & 5.30          & 5.30          \\
                \texttt{MergeMany}                &                                                                                 & 0.88                                               & 0.86                                             & 1.11                                 & 1.13                                               &                                                                                                            & 0.38          & 0.37                                               & 2.08                                             & 2.06          &                                                                                                             & 0.31          & 0.28          & 3.01          & 2.76          \\
                \texttt{MergeMany}$^\dagger$      &                                                                                 & 0.88                                               & 0.86                                             & 1.11                                 & 1.13                                               &                                                                                                            & 0.50          & 0.50                                               & 2.34                                             & 2.30          &                                                                                                             & 0.24          & 0.22          & 3.31          & 3.12          \\
                \texttt{$C^2M^3$}                 &                                                                                 & 0.89                                               & 0.87                                             & 1.07                                 & 1.10                                               &                                                                                                            & 0.42          & 0.40                                               & 2.11                                             & 2.05          &                                                                                                             & 0.34          & 0.30          & 2.94          & 2.63          \\
                \texttt{$C^2M^3$}$^\dagger$       &                                                                                 & \textbf{0.89}                                      & \textbf{0.87}                                    & \textbf{1.07}                        & \textbf{1.10}                                      &                                                                                                            & \textbf{0.72} & \textbf{0.69}                                      & \textbf{1.26}                                    & \textbf{1.12} &                                                                                                             & \textbf{0.53} & \textbf{0.46} & \textbf{2.13} & \textbf{1.67} \\
                \midrule
                \texttt{Naive}                    & \parbox[t]{4mm}{\multirow{5}{*}{ \rotatebox[origin=c]{90}{$\underset{2\times}{\text{\texttt{ResNet}}}$} }} & 0.04                                              & 0.04                                            & 4.04                                & 4.04                                              & \parbox[t]{4mm}{\multirow{5}{*}{ \rotatebox[origin=c]{90}{\text{\texttt{VGG16}}}}}                         & 0.10          & 0.10                                               & 2.31                                             & 2.31          & \parbox[t]{4mm}{\multirow{5}{*}{ \rotatebox[origin=c]{90}{$\underset{16\times}{\text{\texttt{ResNet}}}$} }} & 0.01          & 0.01          & 6.22          & 6.22          \\
                \texttt{MergeMany}                &                                                                                 & 0.03                                              & 0.03                                            & 7.17                                & 7.18                                              &                                                                                                            & 0.10          & 0.10                                               & 2.36                                             & 2.36          &                                                                                                             & 0.45          & 0.38          & 2.32          & 3.06          \\
                \texttt{MergeMany}$^\dagger$      &                                                                                 & 0.03                                              & 0.03                                            & 4.74                                & 4.72                                              &                                                                                                            & 0.60          & 0.57                                               & 1.43                                             & 1.32          &                                                                                                             & 0.41          & 0.35          & 2.27          & 2.68          \\
                \texttt{$C^2M^3$}                 &                                                                                 & 0.27                                              & 0.27                                            & 3.43                                & 3.47                                              &                                                                                                            & 0.11          & 0.11                                               & 2.34                                             & 2.34          &                                                                                                             & 0.46          & 0.39          & 2.25          & 3.03          \\
                \texttt{$C^2M^3$}$^\dagger$       &                                                                                 & \textbf{0.60}                                              & \textbf{0.60}                                            & \textbf{1.32}                                & \textbf{1.34}                                              &                                                                                                            & \textbf{0.64} & \textbf{0.62}                                      & \textbf{1.34}                                    & \textbf{1.23} &                                                                                                             & \textbf{0.60} & \textbf{0.49} & \textbf{1.43} & \textbf{2.23} \\
                \bottomrule
            \end{tabular}
        }
    \end{center}
    \caption{Accuracy of the merged model when merging $5$ models trained with different initializations. The best results are highlighted in bold. $^\dagger$ denotes models after the \texttt{REPAIR} operation.}
    \label{tab:merge-many}
    \vspace{-0.7cm}
\end{table}





\paragraph{Increasing $n$}
In this experiment, we show how the merged model behaves when increasing the number of aggregated models. As we can see in \cref{fig:acc_vs_n_MLP}, increasing the number of MLPS up to $20$ causes the performance to slightly deteriorate in a relative sense, but remaining stable in an absolute sense as it doesn't fall below $98\%$.
More surprisingly, \cref{fig:acc_vs_n_resnet4x} shows that for a \texttt{ResNet20} architecture with $4\times$ width the loss and accuracy are not monotonic, but rather they seem to slightly fluctuate. This may hint at the merging process being more influenced by the composition of the model set, than by its cardinality. Intuitively, a model that is difficult to match with the others will induce a harder optimization problem, possibly resulting in a worse merged model. We dive deeper in the effect of the composition of the set of models in \cref{app:subsets}. In short, \emph{our approach is effective in merging a larger number of models, suggesting promise in federated settings.}

\begin{figure}
    \begin{subfigure}{0.48\textwidth}
        \centering
        \includegraphics[width=\textwidth]{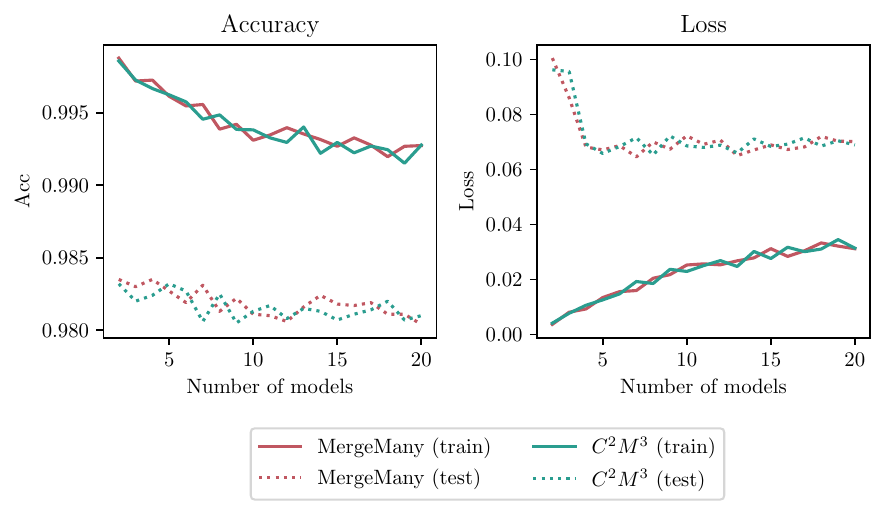}
        \caption{MLPs trained over \texttt{MNIST}.}\label{fig:acc_vs_n_MLP}
    \end{subfigure}
    \hfill 
    \begin{subfigure}{0.48\textwidth}
        \centering
        \includegraphics[width=\textwidth]{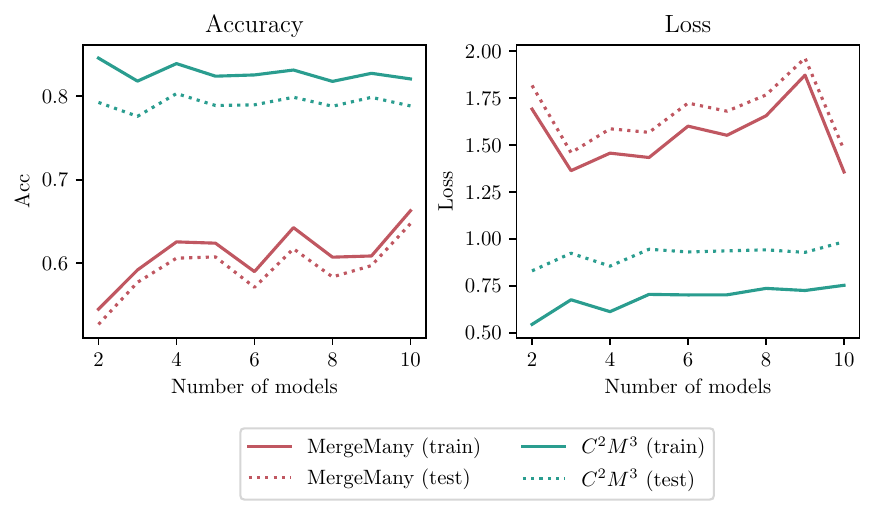}
        \caption{\texttt{ResNet20} models trained over \texttt{CIFAR10}.}\label{fig:acc_vs_n_resnet4x}
    \end{subfigure}
    \caption{Accuracy and loss when increasing the number $n$ of models to match and merge.}
    \label{fig:acc_vs_n_models} 
\end{figure}

\begin{wrapfigure}[12]{r}{0.5\textwidth}
    \vspace{-0.4cm}
    \centering
    \includegraphics[width=0.5\textwidth]{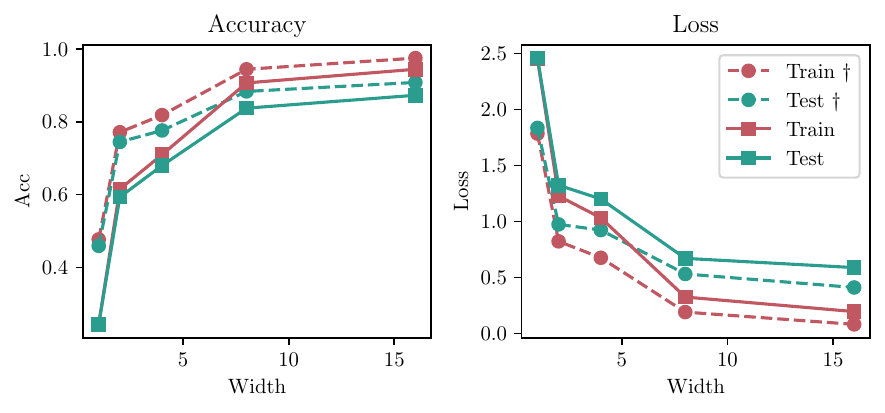}
    \caption{Accuracy and loss when merging $3$ \texttt{ResNet20}s trained over \texttt{CIFAR10} with different widths. $\dagger$ indicates models after applying \texttt{REPAIR}.}
    \label{fig:resnet-widths}
\end{wrapfigure}
\paragraph{Varying widths}
We now measure how architectural width affects model merging, taking into consideration \texttt{ResNet20} architectures with width $W \in \{1, 2, 4, 8, 16\}$. As we can see in \cref{fig:resnet-widths}, \emph{width greatly increases the performance of the merged model}, reaching the zero-loss barrier first observed in \cite{git-rebasin} when $W=16$. This is in line with the observations relating linear mode connectivity and network widths~\cite{Entezari2021-me, git-rebasin}, and confirms the intuition that the merging is only effective when modes \emph{can} be linearly connected. 

\paragraph{Alternative: fixing one model as universe} 
\begin{wrapfigure}[19]{l}{0.45\textwidth}
    \centering
    \includegraphics[width=0.45\textwidth]{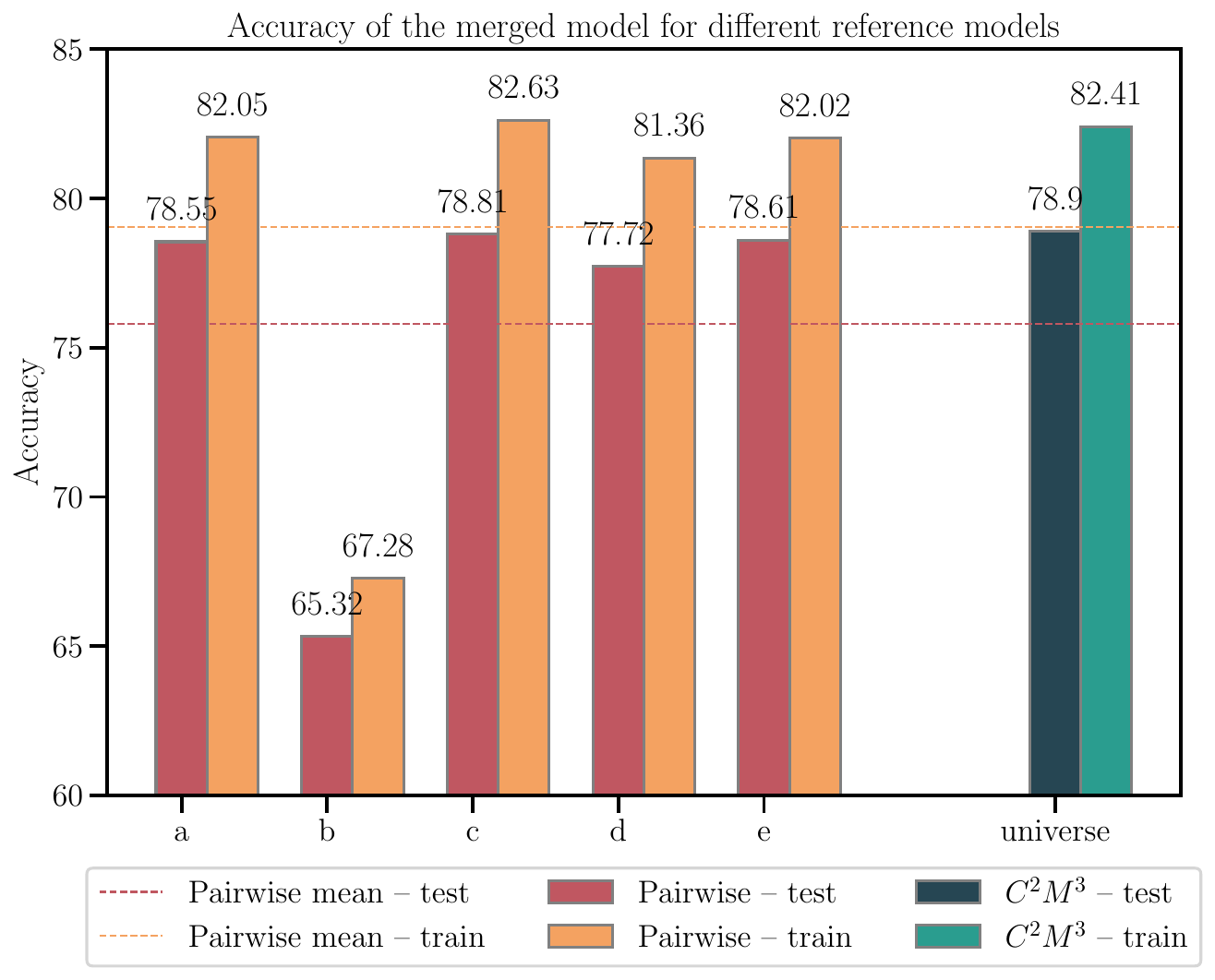}
    \caption{Accuracy of the merged model when mapping towards one arbitrary model (a, b, c, d, e) versus using $C^2M^3$ and the universe space.}
    \label{fig:pairwise_to_reference_barplot}
\end{wrapfigure}
Alternatively, one could achieve cycle consistency by using one of the source models as reference and learning pairwise maps towards this one. 
This, however, would require arbitrarily selecting one of the models, making the overall merging dependent on an arbitrary choice.
To see why this matters, we merged $5$ \texttt{ResNet20-}$4\times$ by choosing one model as reference and aggregating the models in its basin.
\Cref{fig:pairwise_to_reference_barplot} shows severe oscillations in the results, with one model reaching an accuracy as low as 65\%, while our approach performs as the best possible reference. This approach, moreover, does not address multi-model merging, as it is intrinsically pairwise: in a multi-task setting, models optimally mapped to a reference basin would only be able to solve the task solved by the reference model. This would prevent merging to be used for models containing complementary information, such as knowledge fusion~\cite{knowledge-fusion} or multi-task merging \cite{zip-it}. In our setting, instead, the universe model must by design be a function of all the models and act as a midpoint, hence aggregating information from all the models.

\paragraph{Linear mode connectivity in the universe} \label{app:linear-mode-connectivity}
\Cref{fig:losses-universe} shows that the loss curves of models interpolated in the universe are much lower than those interpolated in the original space, suggesting that the models are more connected in the former. These results, together with the loss landscape observed in \cref{fig:cifar_loss_contour},  \emph{encourage merging the models in the universe space due to the lower loss barrier.}

\begin{figure}
    \begin{subfigure}{0.48\textwidth}
        \centering
        \includegraphics[width=\textwidth]{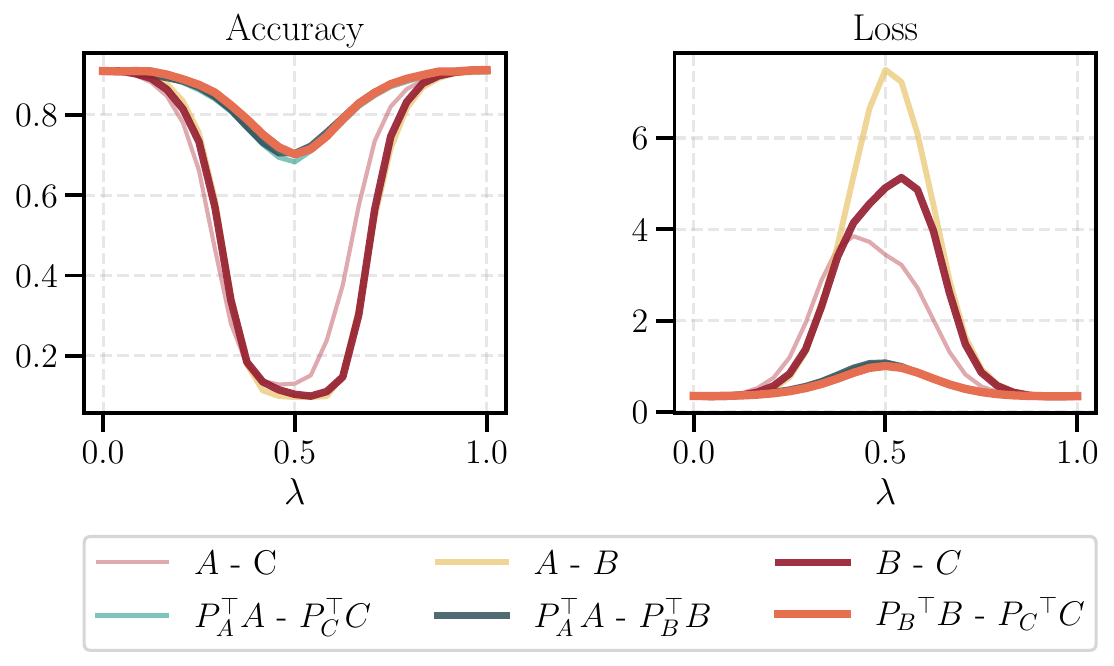}
        \caption{2D visualization of accuracy and loss of the models sampled from the pairwise interpolation lines.}\label{fig:lmc_curves_in_universe}
    \end{subfigure}
    \hfill
    \begin{subfigure}{0.48\textwidth}
        \centering
        \begin{overpic}[width=\textwidth]{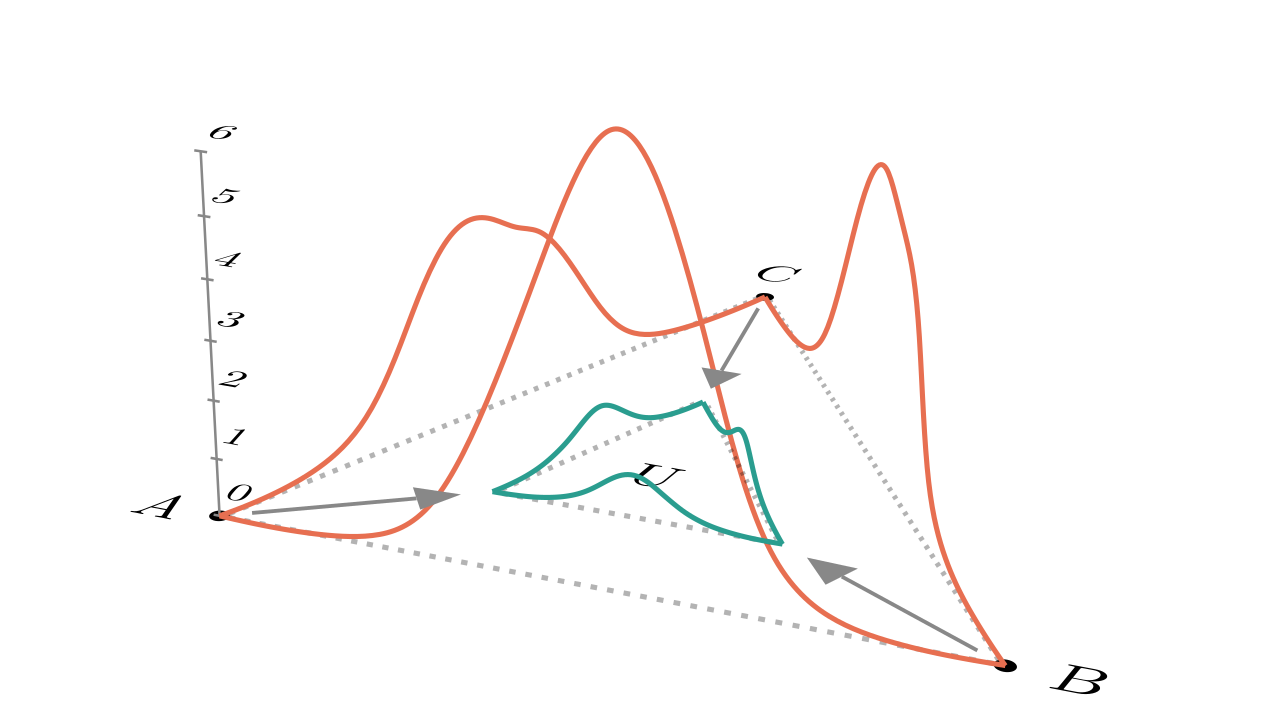}
        \end{overpic}
        \caption{3D visualization of the loss of the models sampled from the pairwise interpolation lines.}
        \label{fig:3D-losses-universe}
    \end{subfigure}
    \caption{Linear mode connectivity before and after mapping to the universe for $3$ \texttt{ResNet20-2$\times$} models trained over \texttt{CIFAR10} according to \cref{alg:frank-wolfe-generalized}.} \label{fig:losses-universe}
    \vspace{-0.15cm}
\end{figure}

\section{Related work}\label{sec:related-work}
\textbf{Mode Connectivity and model merging.}
Mode connectivity studies the weights defining local minima.~\citet{linear-mode-connectivity} studied linear mode connectivity of models that were trained for a few epochs from the same initialization and related it to the lottery ticket hypothesis. Without requiring the same initialization, \citet{Entezari2021-me} speculated that all models share a single basin after having solved for the neuron permutations. 
Model merging aims at aggregating different models into a single one to inherit their capacities without incurring in the cost and burden of ensembling. In this regard, \citet{model-fusion} proposed an optimal-transport based weight-matching procedure, while \texttt{Git Re-Basin}~\cite{git-rebasin} proposed three matching methods and the \texttt{MergeMany} procedure seen in \cref{subsec:merging-n-models}. Subsequently, \texttt{REPAIR}~\cite{repair} showed that a significant improvement in performance of the interpolated model may be obtained by renormalizing its activactions rather than changing matching algorithm. 
Differently from all these works, we consider merging $n$ models and propose a principled way to perform it with cycle-consistency guarantees. 

\textbf{Cycle consistency.}
Ensuring cycle consistency of pairwise maps is a recurring idea in the computer vision and pattern recognition literature. In the realm of deep learning, earlier studies addressing multi-graph matching achieved cycle consistency by synchronizing ex-post the predicted pairwise permutations~\cite{wang08,NEURIPS2020_e6384711}. The alternative approach using an object-to-universe matching framework, which we adopt here, inherently ensures cycle consistency by construction, as demonstrated in~\cite{Bernard_2019_ICCV,huang13,10.1007/978-3-540-74936-3_45}. To the best of our knowledge, none of the existing works tackles cycle-consistent alignment of neural models. We refer to \cref{app:extended-related-work} for a more detailed list of related works.

\section{Conclusions}\label{sec:conclusions}
In this work, we treated the problem of model matching and merging. We first introduced a novel weight matching procedure based on the Frank-Wolfe algorithm that optimizes for the permutation matrices of all layers jointly, and then generalized it to the case of $n$ models. Guaranteeing cycle-consistency, the latter poses a principled way to merge a set of models without requiring an arbitrary reference point. We then showed the approach to yield superior performance compared to existing ones in merging multiple models in a set of scenarios spanning different architectures and datasets.  We believe the formalism to elegantly fit the requirement for the merging operation to unify the different models into a cohesive one, rather than mapping all of them to one of the models in the set. 

\section*{Acknowledgments}
This work is supported by the ERC grant no.802554 (SPECGEO), PRIN 2020 project
no.2020TA3K9N (LEGO.AI), and PNRR MUR project PE0000013-FAIR. Marco Fumero is supported by the MSCA IST-Bridge fellowship which has received funding from the European Union’s Horizon 2020 research and innovation program under the Marie Skłodowska-Curie grant agreement No 101034413. We thank Simone Scardapane for the helpful feedback on the paper. 

\bibliography{references}
\bibliographystyle{plainnat}

\newpage
\onecolumn
\appendix

\listofappendices
\newpage

\section{Additional details}\label{appendix-a}

Here we report in-depth explanations and additional experimental details. In particular, \cref{app:extended-related-work} extensively outlines the most related works, 
 \cref{app:pairwise-frankwolfe} shows the \texttt{Frank-Wolfe} algorithm for the pairwise case, while \cref{app:merge-many} describes the \texttt{MergeMany} procedure presented in~\cite{git-rebasin} for merging multiple models. We also report the \texttt{REPAIR} method in \cref{app:repair}. Finally, we show how the matching algorithm empirically converges in \cref{app:convergence}.

\subsection{Extended related work}\label{app:extended-related-work}
We report here a thorough review of works that are relevant to our research, providing a comprehensive understanding of the context of our work. 
\paragraph{Linear mode connectivity}
Mode connectivity is interested in modes, \emph{i.e.} model parameters at convergence. In this regard, \citet{linear-mode-connectivity} first studied the connectivity of the parameters of models that were trained for a few epochs from the same initialization, while \citet{Garipov2018-pz} investigated whether these can be connected through a high-accuracy path without requiring the same initialization. 
Simultaneously, \citet{Draxler2018-vr} proposed an algorithm to find a \emph{Minimum Energy Path} (MEP) between two modes of a neural network, showing that these paths are mostly flat in both the training and test landscapes. This implies that many minima actually live in a shared low-loss valley rather than in distinct basins.
On a different perspective, \citet{permutationequivariance} proposed to study a class of neural functionals which are permutation-equivariant by design. 
Recent research proposes to study model behavior in the weight space beyond linear mode connectivity: \citet{mechanisticmc} show that different ``mechanisms'' in related models prevent simple paths of low loss in the weight space, while \citet{beyondlmc} studied the linear connections between the linear features of each layer of differently trained models.

\paragraph{Model merging}
Model merging~\cite{git-rebasin,rebasin-implicit-sinkhorn,model-fusion,llmmerging,robots,zip-it} has seen a surge of interest in the last years as a mean to ensemble models without incurring in the added computational cost. One of the first works in this direction is \citet{model-fusion}, who proposed an optimal-transport based weight-matching procedure. Later, \citet{git-rebasin} proposed three matching methods, one of which being data-free. 
Closer to our global optimization, \citet{rebasin-implicit-sinkhorn} proposed a gradient-descent based procedure that iteratively updates soft permutation matrices maintaining their bistochasticity via a differentiable Sinkhorn routine. 
When the models to match have been trained on different tasks, \citet{zip-it} introduce a more general ``zip'' operation that accounts for features that may be task-specific and further allow obtaining multi-headed models. Most recently, \citet{navon2023equivariant} proposed aligning models in the embedding space of a deep weight-space architecture. Finally, weight merging proved useful for large language models~\cite{llmmerging} and robotics~\cite{robots}.
For a complete survey of mode connectivity and model merging, we refer the reader to~\cite{surveydmf}.

\paragraph{Cycle consistency}
Cycle consistency is a recurrent idea in computer vision and pattern recognition, where it appears under different names (e.g., ``synchronization'', ``loop constraints'', or ``multi-way matching'') depending on the task.
In the area of multi-view 3D reconstruction, \citet{5539801} were probably the first to make an explicit attempt at finding solutions meeting the cycle-consistency requirement, although without ensuring theoretical guarantees on the result. In geometry processing, \citet{Cosmo2017209} ensured cycle-consistent alignment of collections of 3D shapes using an $n$-fold extension of the Gromov-Wasserstein distance with sparsity constraints. Overall, cycle consistency is a recurring idea in the computer vision~\cite{wang2013exact, 5539801, sync-CV} graph matching~\cite{pachauri, convex-relaxation, 7780916} and geometry processing literature \cite{10.1145/2601097.2601111, Cosmo2017209, Bernard_2019_ICCV}.

\subsection{Pairwise Frank-Wolfe Algorithm}\label{app:pairwise-frankwolfe}
As introduced in \cref{subsec:pairwise-matching}, we optimize a layer-global objective by iteratively optimizing its linear approximation via the the Frank-Wolfe algorithm~\cite{frank-wolfe}. We compute the gradient of \cref{eq:weight-matching-obj} with respect to each permutation ${P}_i$, as the sum of two contributions for each $\nabla_{P_i}$:  one from permuting the rows of ${W}_i$ and another from permuting the columns of ${W}_{i+1}$:
\begin{align}
    \nabla_{P_i}f & = \underbrace{W^A_i P_{i-1} {(W_i^B)}^\top}_{\text{from permuting rows}} + \underbrace{{(W^A_{i+1})}^\top P_{i+1}  W_{i+1}^B}_{\text{from permuting columns}}.
\end{align}
We report in \Cref{alg:frank-wolfe-pairwise} the Frank-Wolfe algorithm for the pairwise case.
\begin{algorithm} \label{alg:frankwolfepw}
\caption{Frank-Wolfe for pairwise Weight Matching}
\label{alg:frank-wolfe-pairwise}
    \begin{algorithmic}[1]
    \REQUIRE Weights of two models $A$ and $B$ with $L$ layers, tolerance $\epsilon > 0$
    \ENSURE Approximate solution to \cref{eq:weight-matching-obj}
    \STATE $\mathbf{P}^k \gets $ identity matrices
    \REPEAT{}
        \FOR{$i=1$ to $L$}  
            \STATE $P_i^k \gets$ permutation acting on rows of $W_i$
            \STATE $P_{i-1}^k \gets$ permutation acting on columns of $W_i$
            \STATE $\nabla_{P_i^K} f \mathrel{+}= W^A_i P_{i-1}^k (W_i^B)^\top$ 
            \STATE $\nabla_{P_{i-1}^k} f \mathrel{+}= (W_i^A)^\top P_{i}^k W_i^B$
        \ENDFOR
        \FOR{$P^k_i \in \mathbf{P}^k$}  
            \STATE $\Pi_i \gets \operatorname{LAP}(\nabla_{P_i^K} f)$ 
        \ENDFOR
        \STATE $\alpha \gets \textsc{LineSearch}(f, \mathbf{P}^k, \mathbf{\Pi})$
        \FOR{$P^k_i \in \mathbf{P}^k$}
            \STATE $P_i^{k+1} = (1-\alpha) P_i^k + \alpha  \ \Pi_i$
        \ENDFOR
    \UNTIL $\|f(A, B, \mathbf{P}^{k+1}) - f(A, B, \mathbf{P}^{k})\| < \epsilon$
    \STATE \textbf{return} $\mathbf{P}^k$
    \end{algorithmic}
\end{algorithm}

\subsection{MergeMany Algorithm}\label{app:merge-many}
\Cref{alg:merge-many} reports the MergeMany procedure originally proposed by \citet{git-rebasin} for merging multiple models, mainly consisting in alternating matching and aggregation until convergence. In practice, at each iteration, the procedure picks a reference model at random and matches all the other models to it. Then, they are all aggregated by averaging the weights.
\begin{algorithm}
    \caption{\textsc{MergeMany}}
    \label{alg:merge-many}
    \begin{algorithmic}[1]
        \REQUIRE Model weights $\Theta_1, \dots, \Theta_N$
        \ENSURE A merged set of parameters $\tilde{\Theta}$.
        \REPEAT
        \FOR{$i \in \textsc{RandomPermutation}(1,\dots,N)$}
        \STATE $\Theta' \gets \frac{1}{N-1}\sum_{j\in \{1,\dots,N\} \setminus \{i\}} \Theta_j$
        \STATE $\pi \gets \textsc{PermutationCoordinateDescent}(\Theta', \Theta_i)$
        \STATE $\Theta_i \gets \pi(\Theta_i)$
        \ENDFOR
        \UNTIL{convergence}
        \STATE \textbf{return} $\frac{1}{N}\sum_{j=1}^N \Theta_j$
    \end{algorithmic}
\end{algorithm}

\subsection{REPAIR}\label{app:repair}
Observing a decay in the variance of the activations of the aggregated model, \citet{repair} proposed \texttt{REPAIR}, which renormalizes the activations of the merged model to match the statistics of the original models. In particular, given two endpoint models with activations $X_1$ and $X_2$, the activations $X_\alpha$ of the interpolated model are renormalized to have statistics:
\begin{align}
    \mathbb{E}\left[X_\alpha\right]         & =(1-\alpha) \cdot \mathbb{E}\left[X_1\right]+\alpha \cdot \mathbb{E}\left[X_2\right]                  \\
    \operatorname{std}\left(X_\alpha\right) & =(1-\alpha) \cdot \operatorname{std}\left(X_1\right)+\alpha \cdot \operatorname{std}\left(X_2\right).
    \label{eq:repair}
\end{align}

\subsection{Convergence and efficiency}\label{app:convergence}
\begin{wrapfigure}[18]{r}{0.4\textwidth}
    \centering
    \includegraphics[width=0.4\textwidth]{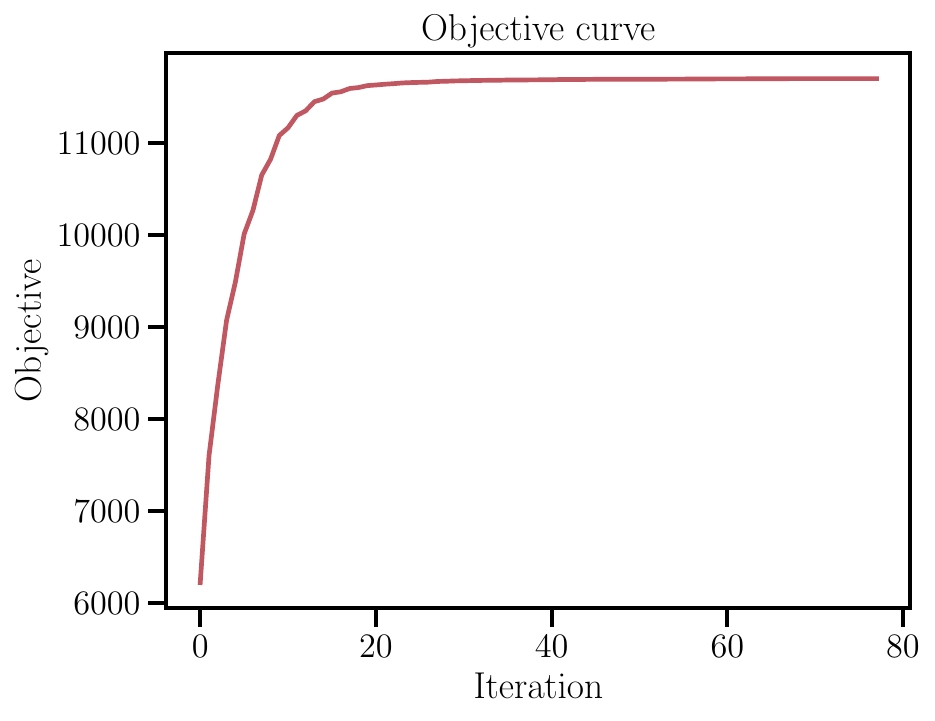}
    \caption{Objective values during the optimization. As guaranteed by the \texttt{Frank-Wolfe} algorithm, the objective value increases monotonically.}
    \label{fig:convergence-obj}
\end{wrapfigure}
We report here the convergence of our matching algorithm. In particular, \cref{fig:convergence-obj} shows the objective values during the optimization, exhibiting the expected monotonic increase, while \cref{fig:convergence-step-sizes} shows the step sizes result of the line search at each iteration.
Interestingly, \cref{fig:conv-step-sizes} shows that the step sizes are generally decreasing, but descend in an alternating manner. This is likely due to the fact that the permutations are obtained as consecutive interpolations, where even steps result in a soft permutation matrix that is the average of the current and next permutation matrix, while odd steps generally result in a hard permutation matrix with entries in $[0,1]$.
\Cref{fig:frank-wolfe-steps} finally shows the intermediate permutation values during the optimization: at each step, the entries of the permutation matrix are the linear interpolation of the current solution and the projected gradient with factor $\alpha$ given by the step size. The red values in the figure represent entries currently being updated, which are neither 1 (blue) or 0 (yellow). 

We report in \cref{tab:model_perf} the wall-clock time when merging $n=2,3$ \texttt{ResNet20} models having $1\times$, $2\times$, $4\times$, $8\times$ and $16\times$ width, together with their number of parameters. 

\begin{table}[H]
    \centering
    \label{tab:model_perf}
    \begin{tabular}{cccccc}
    \toprule
        & 1x & 2x & 4x & 8x & 16x \\ \midrule
    \# params & 292k & 1.166m & 4.655m & 18.600m & 74.360m  \\
    \midrule
        \multicolumn{6}{c}{n=2} \\ 
    \midrule
    $C^2M^3$ & 33.4s & 33.5s & 40.5s & 80.8s & 367.8s \\
    \texttt{MergeMany} & 0.24s & 0.4s & 3.4s & 8.9s & 59.4s \\
    \midrule
        \multicolumn{6}{c}{n=3} \\ 
    \midrule
    $C^2M^3$ time & 32.9s & 83.18s & 91.0s & 162.0s & 715.8s \\
    \texttt{MergeMany} & 1.2s & 4.1s & 19.5s & 105.8s & 892.3s \\
    \bottomrule \\
    \end{tabular}
    \caption{Wall-clock time for merging $n=3$ ResNet20 models with different widths.}
\end{table}
As can be inferred from the table, the scaling laws depend on the complexity of the resulting matching problem and cannot be predicted merely from the number of parameters, with a 4-fold increase in parameters resulting in no increase in runtime for the first three columns, a double increase in the second-last column and a 5-fold increase in the last. Compared to MergeMany, our approach enjoys a milder increase in running time when increasing the number of parameters. For simpler settings, however, \texttt{MergeMany} is significantly faster.
Being the two approaches on the same order of magnitude and given the one-time nature of model merging, we believe this aspect to be of secondary importance, especially considering merging to be, in many cases, an alternative to training a model from scratch.

\begin{figure}
    \begin{subfigure}{.31\textwidth}
        \centering
        \includegraphics[width=\textwidth]{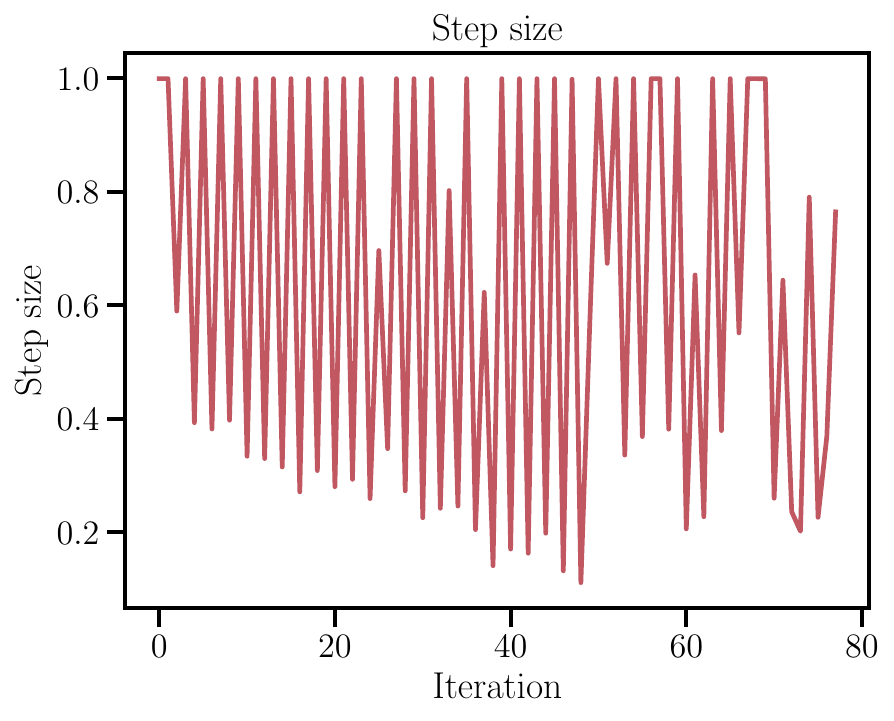}
        \caption{Step sizes for all iterations.}\label{fig:conv-step-sizes}
    \end{subfigure}
    \begin{subfigure}{.31\textwidth}
        \centering
        \includegraphics[width=\textwidth]{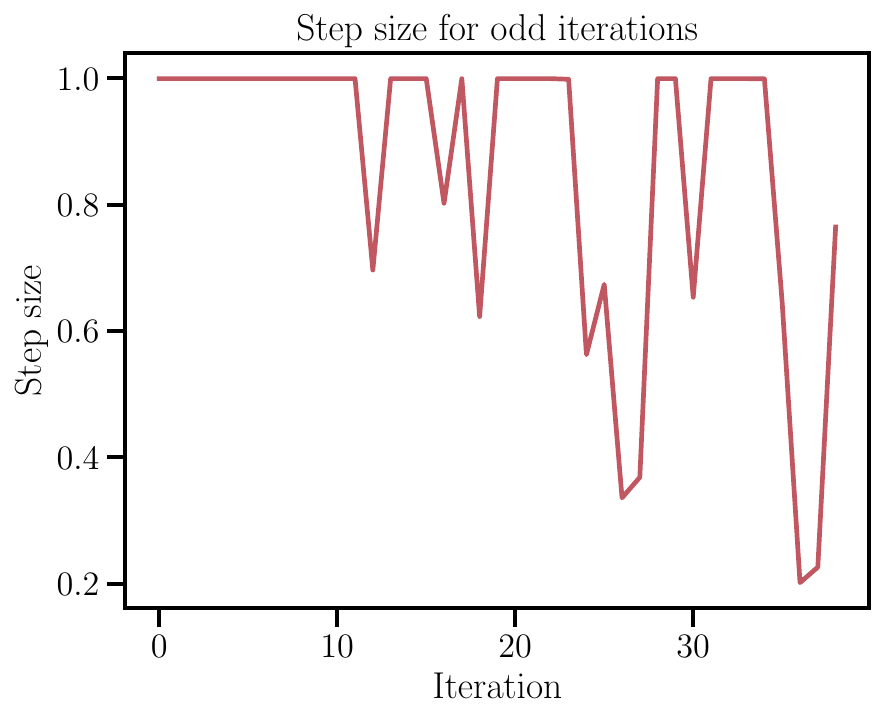}
        \caption{Step sizes for odd iterations.}\label{fig:conv-step-sizes-odd}
    \end{subfigure}
    \begin{subfigure}{.31\textwidth}
        \centering
        \includegraphics[width=\textwidth]{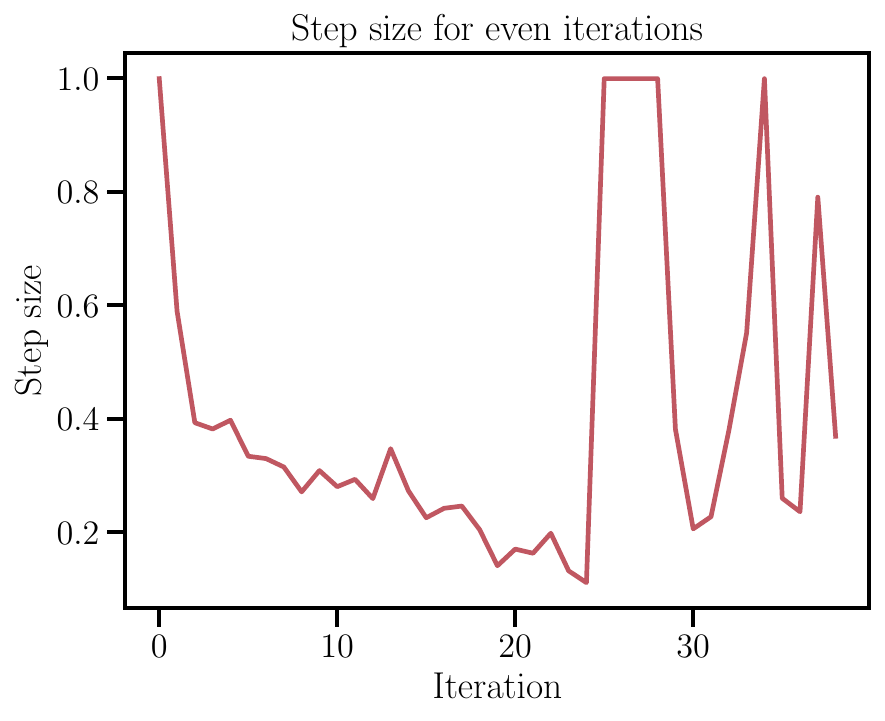}
        \caption{Step sizes for even iterations.}\label{fig:conv-step-sizes-even}
    \end{subfigure}
    \caption{Step sizes during the optimization.}
    \label{fig:convergence-step-sizes}
\end{figure}
\begin{figure}
    \centering
    \includegraphics[width=.8\textwidth]{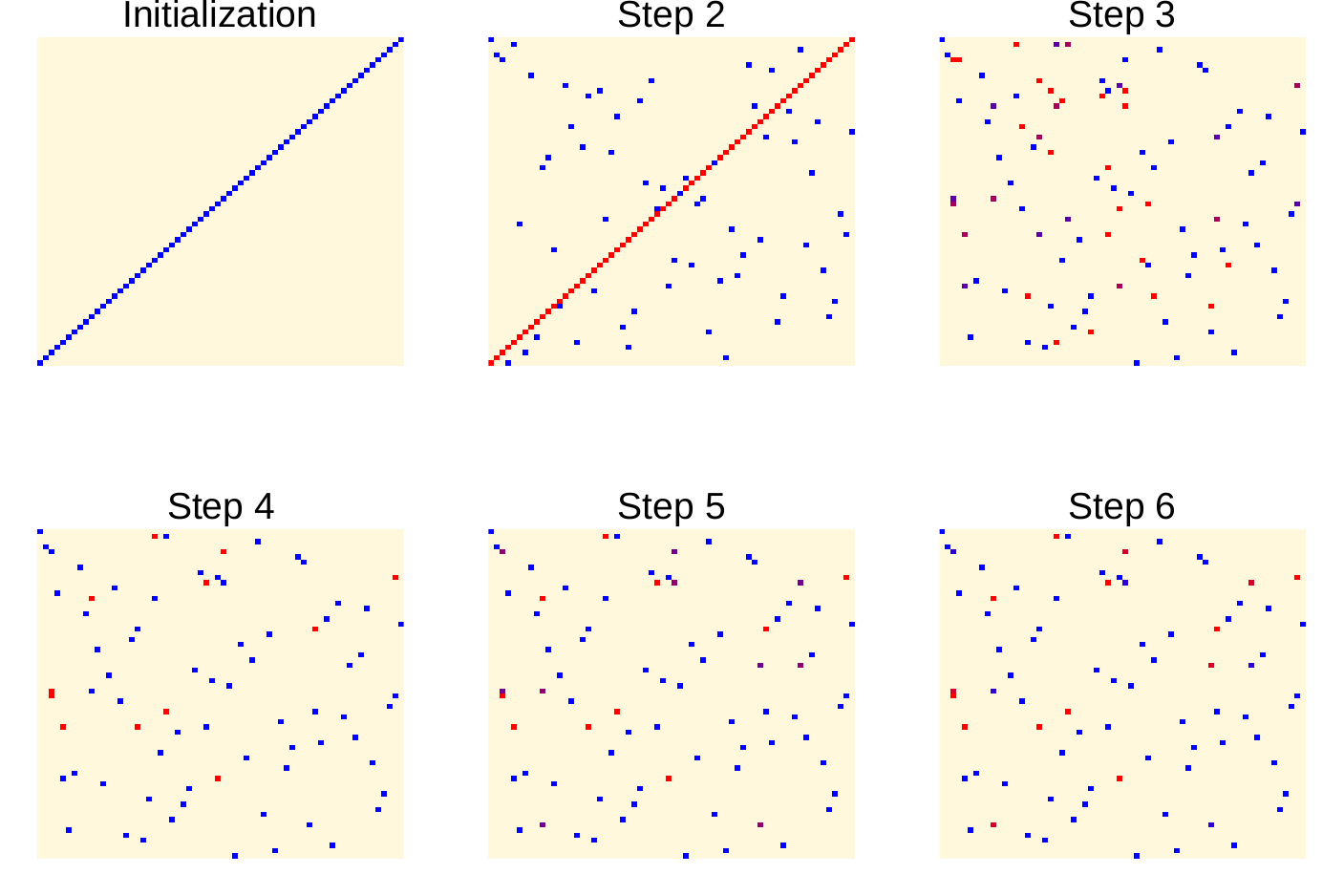}
    \caption{First 6 steps of \cref{alg:frank-wolfe-pairwise} for one permutation matrix. At each step, the new solution is given by the linear interpolation of the current solution and the gradient of \cref{eq:weight-matching-obj}. }\label{fig:frank-wolfe-steps}
\end{figure}

\subsection{Architectural details}\label{app:architecture-details}
We report here the architectural details of all the architectures we have used in the experiments.

\paragraph{Multi-Layer Perceptrons}
We use a simple MLP mapping input to a $256$-dimensional space followed by 3 hidden layers of 512, 512 and 256 units respectively, followed by an output layer mapping to the number of classes. We use \emph{ReLU} activations for all layers except the output layer, where we use a \emph{log\_softmax} activation.

\paragraph{ResNet}
We consider a \texttt{ResNet20}~\cite{resnet} architecture composed by three ResNet block groups, each containing three residual blocks. The model starts with an initial convolutional layer followed by normalization and \emph{ReLU} activation. It then passes through the three block groups with increasing channel sizes (determined by the widen factor) and varying strides, followed by global average pooling and a fully connected layer that outputs class logits. As normalization layers, we consider both the most commonly used \emph{BatchNorm}~\cite{batchnorm} and, for the sake of comparing with \texttt{Git Re-Basin}, also \emph{LayerNorm}~\cite{layernorm}. The results in the main manuscript are all obtained with \emph{LayerNorm}, while we report the results with \emph{BatchNorm} in \cref{app:batch-norm}.

\paragraph{VGG}
We employ a \texttt{VGG16}~\cite{simonyan2015deep} architecture with \emph{LayerNorm}~\cite{layernorm} normalization layers. The model has the following convolutional layer dimensions, with ``M'' indicating the presence of a max-pooling layer
\begin{equation}
    64, 64, M, 128, 128, M, 256, 256, 256, M, 512, 512, 512, M, 512, 512, 512, M
\end{equation}
The convolutional layers are organized in 5 blocks, each containing 2 or 3 convolutional layers, followed by a max-pooling layer. The final classifier is composed of three fully connected layers with $512$ hidden dimension and ReLU activations.

\subsection{Datasets, hyperparameters and hardware details}
We employ the most common datasets for image classification tasks: \texttt{MNIST}~\cite{MNIST}, \texttt{CIFAR-10}~\cite{CIFAR}, \texttt{EMNIST}~\cite{EMNIST} and \texttt{CIFAR-100}~\cite{CIFAR}, having $10$, $10$, $26$ and $100$ classes respectively. We use the standard train-test splits provided by \emph{torchvision} for all datasets.

We use the same hyperparameters as \texttt{Git Re-Basin} where possible to ensure a fair comparison. In particular, we train most of our models with a batch size of $100$ for $250$ epochs, using SGD with momentum $0.9$, a learning rate of $0.1$, and a weight decay of $10^{-4}$. We use a cosine annealing learning rate scheduler with a warm restart period of $10$ epochs and a minimum learning rate of $0$. We report each and every one of the hyperparameters used for each experiment, as well as all the trained models, in a WandB dashboard\footnote{Link concealed to preserve anonymity.}.

All of the experiments were carried out using consumer hardware, in particular mostly on a 32GiB RAM machine with a \emph{12th Gen Intel(R) Core(TM) i7-12700F} processor and an \textit{Nvidia RTX} 3090 GPU, except for some of the experiments that were carried on a 2080. Our modular and reusable codebase is based on \textit{PyTorch}, leveraging \textit{PyTorch Lightning} to ensure reproducible results and modularity and \textit{NN-Template}\footnote{https://github.com/grok-ai/nn-template} to easily bootstrap the project and enforce best practices.

\subsection{Proofs}

\begin{theorem}
    The gradient of the objective function 
    $$
    \sum_{p=1}^{n-1} \sum_{q=p+1}^{n}  \sum_{\ell=1}^L \langle (P_{\ell}^p )^\top W_\ell^p P_{\ell -1}^p, (P_{\ell}^q)^\top W_{\ell}^q P^q_{\ell -1} \rangle
    $$ 
    is Lipschitz continuous, implying our algorithm obtains a stationary point at a rate of $\mathcal{O}(1 / \sqrt{t})$~\cite{lacoste2016convergence}.
\end{theorem}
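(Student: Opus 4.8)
The plan is to show that the objective
$$
g(\mathbf{P}) = \sum_{p=1}^{n-1}\sum_{q=p+1}^{n}\sum_{\ell=1}^{L} \langle (P_{\ell}^p)^\top W_\ell^p P_{\ell-1}^p,\ (P_{\ell}^q)^\top W_\ell^q P_{\ell-1}^q \rangle
$$
is a polynomial of degree four in the entries of the variables $\{P_\ell^p\}$, and then argue that on the compact feasible set its gradient is Lipschitz; the rate statement then follows verbatim from the nonconvex Frank--Wolfe analysis of \citet{lacoste2016convergence}. First I would fix notation: stack all the permutation variables into a single vector $\mathbf{P}$ living in the product of Birkhoff polytopes $\mathcal{D} = \prod_{p,\ell}\mathrm{conv}(\mathbb{P})$, which is convex and compact. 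Each summand $\langle (P_\ell^p)^\top W_\ell^p P_{\ell-1}^p,\ (P_\ell^q)^\top W_\ell^q P_{\ell-1}^q\rangle$ expands, via $\langle A, B\rangle = \mathrm{tr}(A^\top B)$, into a multilinear expression of total degree four (degree one in each of $P_\ell^p, P_{\ell-1}^p, P_\ell^q, P_{\ell-1}^q$), with coefficients built from entries of the fixed weight matrices $W_\ell^p, W_\ell^q$. Hence $g$ is a fixed multivariate polynomial of degree four, so $g \in C^\infty$ and in particular its gradient $\nabla g$ is $C^1$, hence locally Lipschitz, on all of $\mathbb{R}^{\dim \mathbf{P}}$.

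The second step is to upgrade "locally Lipschitz" to "Lipschitz on $\mathcal{D}$", which is where the compactness of the Birkhoff polytope is essential: a $C^1$ map restricted to a compact convex set is Lipschitz there, with constant $L_g = \sup_{\mathbf{P}\in\mathcal{D}}\|\nabla^2 g(\mathbf{P})\|_{\mathrm{op}}$, and this supremum is finite and attained because $\nabla^2 g$ is continuous (polynomial entries) and $\mathcal{D}$ is compact. If a more explicit bound is wanted, I would note that every entry of every $P_\ell^p$ lies in $[0,1]$ and each doubly stochastic matrix has operator norm $1$, so one can bound each second partial derivative of $g$ by a polynomial in $\max_{p,\ell}\|W_\ell^p\|$ and the layer dimensions, yielding an explicit (if loose) $L_g$; but for the theorem as stated, existence and finiteness of $L_g$ is all that is needed. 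Since Frank--Wolfe iterates stay in $\mathcal{D}$ by convexity of the update $P^{k+1} = (1-\alpha)P^k + \alpha\Pi$ with $\alpha\in[0,1]$ and $\Pi$ a vertex of $\mathcal{D}$, the Lipschitz-gradient hypothesis of the relevant theorem in \citet{lacoste2016convergence} is met on the whole trajectory.

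The final step is bookkeeping: invoke Theorem~1 (or its analogue) of \citet{lacoste2016convergence}, which states that for a (possibly nonconvex) objective with $L_g$-Lipschitz gradient optimized over a compact convex set by Frank--Wolfe with line search (or a suitably decaying step size), the minimum Frank--Wolfe gap over the first $t$ iterations is $\mathcal{O}(\max\{h_0, L_g D^2\}/\sqrt{t})$ where $D = \mathrm{diam}(\mathcal{D})$ and $h_0 = g^\star - g(\mathbf{P}^0)$; since $\mathcal{D}$ has finite diameter (again compactness) and $g$ is bounded on $\mathcal{D}$, both constants are finite, giving the advertised $\mathcal{O}(1/\sqrt{t})$ convergence to a stationary point. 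I expect the only genuine subtlety — more a matter of care than of difficulty — to be in step one: one must verify that after factorizing $P^{pq} = P^p(P^q)^\top$ the objective is degree four rather than degree eight, i.e., that the transpose-and-pair identity used in \Cref{eq:gen-frank-wolfe-obj} genuinely collapses the eighth-degree-looking $\langle W^p, P^p(P^q)^\top W^q (P^{p}_{\ell-1}(P^{q}_{\ell-1})^\top)^\top\rangle$ down to a quartic; once that reduction is in hand the rest is standard compactness plus a citation.
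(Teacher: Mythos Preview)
Your proposal is correct and takes a genuinely different route from the paper's own proof. The paper proceeds by writing out the four gradient contributions $\nabla^{\text{rows}}_{P_\ell^A}, \nabla^{\text{cols}}_{P_\ell^A}, \nabla^{\text{rows},\leftrightarrows}_{P_\ell^A}, \nabla^{\text{cols},\leftrightarrows}_{P_\ell^A}$ explicitly, forming the difference $f(P_\ell^p)-f(Q_\ell^p)$ for a single block, and then bounding it term-by-term via the triangle inequality and submultiplicativity of the matrix norm; this yields an explicit Lipschitz constant $C(n-1)$ that exposes the dependence on the number of models and on $\|W_\ell^p\|$. Your argument is instead the abstract one: the objective is a quartic polynomial in the entries of $\{P_\ell^p\}$, hence $\nabla g$ is $C^1$, and a $C^1$ map restricted to the compact convex product of Birkhoff polytopes is globally Lipschitz there with constant $\sup_{\mathcal D}\|\nabla^2 g\|_{\mathrm{op}}$. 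What the paper's approach buys is a concrete constant one can read off; what yours buys is brevity and the fact that it directly gives Lipschitz continuity of the \emph{full} gradient map on $\mathcal D$ rather than only block-wise Lipschitzness in each $P_\ell^p$, which is strictly what the cited Frank--Wolfe theorem requires.

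One minor comment: the ``subtlety'' you flag at the end is not one. The expression $\langle W^p, P^p(P^q)^\top W^q\, P_{\ell-1}^q(P_{\ell-1}^p)^\top\rangle$ visibly contains exactly four permutation factors, each appearing linearly, so it is degree four by inspection; there is no eighth-degree intermediate that needs collapsing. The identity in \Cref{eq:gen-frank-wolfe-obj} is just a rewriting via $\langle A,PBQ^\top\rangle = \langle P^\top A Q, B\rangle$ and does not change the degree.
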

\begin{proof}
We recall that, for each layer permutation $P^A = \{P_1^A, P_2^A, \ldots, P_{L}^A\}$ of model $A$, we can define the gradient of our objective function relatively to the model $B$ we are matching towards:
\begin{align*}
f(P_\ell^A) &= \nabla^{\text{rows}}_{P_\ell^A} + \nabla^{\text{cols}}_{P_\ell^A} + \nabla^{\text{rows},\leftrightarrows}_{P_\ell^A} + \nabla^{\text{cols},\leftrightarrows}_{P_\ell^A} = \\  
& \left[W^A_{\ell} P_{\ell-1}^A (P^B_{\ell-1})^\top (W^B_{\ell})^\top + (W^A_{\ell+1})^\top P_{\ell+1}^A (P^B_{\ell+1})^\top W^B_{\ell+1}\right] P^B_{\ell}
+ \\
& \left[W^B_{\ell} P_{\ell-1}^B(P^A_{\ell-1})^\top (W^A_{\ell})^\top + (W^B_{\ell+1})^\top P_{\ell+1}^B (P^A_{\ell+1})^\top W^A_{\ell+1}\right] P^A_{\ell}
\end{align*}
To prove Lipschitz continuity, we need to show there exists a constant $C$ such that $\forall\; p=1,\dots,n,\; \ell=1,\dots,L\;\;\; \| f(P_\ell^p) - f(Q_\ell^p) \| \leq C \| P_\ell^p - Q_\ell^p \|$. To simplify passages, we only consider a fixed $\ell$ and perform a generic analysis. We begin by observing that
\begin{align*}
f(P_\ell^p) &- f(Q_\ell^p) =  \\
\sum_{q\in[1,n]\setminus \{p\}}
\left[ W^p_{\ell} P_{\ell-1}^p (P^q_{\ell-1})^\top (W^q_{\ell})^\top \right. &+ \left. (W^p_{\ell+1})^\top P_{\ell+1}^p (P^q_{\ell+1})^\top W^q_{\ell+1}\right](P^q_{\ell} - Q^q_{\ell}) + \\
\left[ W^q_{\ell} P_{\ell-1}^q(P^p_{\ell-1})^\top (W^p_{\ell})^\top \right. &+ \left. (W^q_{\ell+1})^\top P_{\ell+1}^q (P^p_{\ell+1})^\top W^p_{\ell+1}\right]  (P^p_{\ell}-Q^p_{\ell})
\end{align*}
The last form of the above equation can be rewritten as a sum of the two sums:
\begin{align*}
\sum_{q\in[1,n]\setminus \{p\}}  \left[W^p_{\ell} P_{\ell-1}^p (P^q_{\ell-1})^\top (W^q_{\ell})^\top \right. &+  \left.(W^p_{\ell+1})^\top P_{\ell+1}^p(P^q_{\ell+1})^\top W^q_{\ell+1}\right] (P^q_{\ell} - Q^q_{\ell})  + \\
\sum_{q\in[1,n]\setminus \{p\}}  \left[W^q_{\ell} P_{\ell-1}^q(P^p_{\ell-1})^\top (W^p_{\ell})^\top \right. &+ \left.(W^q_{\ell+1})^\top P_{\ell+1}^q (P^p_{\ell+1})^\top W^p_{\ell+1}\right] (P^p_{\ell}-Q^p_{\ell})
\end{align*}
Since the first term does not depend on either $P_\ell^p$ or $Q_\ell^p$, we assume as a worst case that its norm is 0. Then, we remove transposes (since $\lVert M \rVert = \lVert M^\top \rVert$) and apply the triangle inequality and the sub-multiplicative property of matrix norms:
\begin{align*}
 \| f(P_\ell^p) - f(Q_\ell^p) \| &\leq \\
\sum_{q\in[1,n]\setminus \{p\}} \|P^p_{\ell}-Q^p_{\ell}\| \left( \|W^q_{\ell}\| \|P_{\ell-1}^q\|\|P^p_{\ell-1}\| \|W^p_{\ell}\| \right. &+ \left. \|W^q_{\ell+1}\| \|P_{\ell+1}^q\| \|P^p_{\ell+1}\| \|W^p_{\ell+1}\| \right)  
\end{align*}
\noindent Let $C = \max_{q\in[1,n]\setminus \{p\}}\left\{ \|W^q_{\ell}\| \|P_{\ell-1}^q\|\|P^p_{\ell-1}\| \|W^p_{\ell}\| + \|W^q_{\ell+1}\| \|P_{\ell+1}^q\| \|P^p_{\ell+1}\| \|W^p_{\ell+1}\|\right\}$. Then,
\begin{equation*}
\| f(P_\ell^p) - f(Q_\ell^p) \| \leq C  \sum_{q\in[1,n]\setminus \{p\}} \|P^p_{\ell}-Q^p_{\ell}\| = C (n-1) \|P^p_{\ell}-Q^p_{\ell}\|
\end{equation*}

we conclude that  $f(P_\ell^p)$  is Lipschitz continuous for all models and all layers, with Lipschitz constant $C(n-1)$ depending on both the norm of the weights matrices and the number of models.

\end{proof}

\section{Additional experiments}\label{appendix-b}
We report additional experiments and results in this section. In particular, \cref{subsec:exp-pairwise-matching} presents a complete evaluation of our matching method for the pairwise case, showing it to be generally competitive with the state-of-the-art \texttt{Git Re-Basin} algorithm~\cite{git-rebasin} and to outperform it on architectures employing \emph{BatchNorm}~\cite{batchnorm} normalization. We then discuss different permutation initialization strategies in \cref{app:init-strategies}.

\subsection{Pair-wise model matching and merging}\label{subsec:exp-pairwise-matching}
\begin{wraptable}{l}{0.4\textwidth}
    \begin{center}
        \resizebox{0.4\textwidth}{!}{%
            \begin{tabular}{clccc}
                \toprule
                                                                                                                           & \multirow{2}{*}{\textbf{Matcher}} & \multicolumn{2}{c}{\textbf{Barrier}}                            \\
                \cmidrule{3-4}
                                                                                                                           &                                   & Train                                & Test                     \\
                \midrule
                \parbox[t]{4mm}{\multirow{3}{*}{ \rotatebox[origin=c]{90}{$\underset{8\times}{\text{\texttt{ResNet}}}$} }} & \texttt{Naive}                    & 7.00 $\pm$ 1.24                      & 8.37 $\pm$ 1.23          \\
                                                                                                                           & \texttt{Git-Rebasin}              & 1.04 $\pm$ 0.10                      & 1.54 $\pm$ 0.13          \\

                                                                                                                           & \texttt{Frank-Wolfe}              & $\mathbf{0.92 \pm 0.06}$             & $\mathbf{1.42 \pm 0.10}$ \\
                \midrule
                \parbox[t]{4mm}{\multirow{3}{*}{ \rotatebox[origin=c]{90}{\texttt{VGG16}} }}
                                                                                                                           & \texttt{Naive}                    & 5.79 $\pm$ 0.39                      & 7.36 $\pm$ 0.38          \\
                                                                                                                           & \texttt{Git-Rebasin}              & 0.44 $\pm$ 0.03                      & 0.64 $\pm$ 0.03          \\
                                                                                                                           & \texttt{Frank-Wolfe}              & $\mathbf{0.44 \pm 0.05}$             & $\mathbf{0.63} \pm 0.06$ \\
                \bottomrule
            \end{tabular}
        }
    \end{center}
    \caption{Mean and standard deviation of the test and train loss barriers for each method when matching $n=2$ models on \texttt{CIFAR100}.}
    \label{tab:pairwise_barriers_table}
\end{wraptable}
As described in \cref{subsec:pairwise-matching}, our formalization can readily be used to match $n=2$ models. In this case, the energy is given by \cref{eq:weight-matching-obj} and the permutations are not factorized. We compare the performance of our approach against the \texttt{Git Re-Basin} algorithm~\cite{git-rebasin} and the \texttt{naive} baseline that aggregates the models by taking an unweighted mean on the original model weights without applying any permutation. From the data presented in \cref{tab:pairwise_barriers_table}, we observe that the approach is competitive with \texttt{Git Re-Basin}~\cite{git-rebasin}, with the two methods exhibiting analogously low test barrier on \texttt{CIFAR10}.
Focusing on the \texttt{ResNet20} architecture, we can see that width plays the same role in both approaches, with the barrier decreasing as it increases. We can also appreciate how, while the same architecture resulted in similar barriers for the two approaches on \texttt{CIFAR10}, the barrier is significantly lower for \texttt{Frank-Wolfe} in \texttt{CIFAR100}, possibly suggesting that the latter is more robust to the complexity of the dataset.

\begin{table}
    \begin{center}
        \resizebox{\textwidth}{!}{%
            \begin{tabular}{lccccccccccc}
                \toprule
                \multirow{3}{*}{\textbf{Matcher}}      & \multicolumn{10}{c}{\textbf{Barrier}} \\
                \cmidrule{2-11}
                                        & \multicolumn{2}{c}{$\underset{2\times}{\text{\texttt{ResNet}}}$} & \multicolumn{2}{c}{$\underset{4\times}{\text{\texttt{ResNet}}}$} & \multicolumn{2}{c}{$\underset{8\times}{\text{\texttt{ResNet}}}$} & \multicolumn{2}{c}{$\underset{16\times}{\text{\texttt{ResNet}}}$} & \multicolumn{2}{c}{$\text{\texttt{VGG16}}$} \\
                \midrule
                & Train                                                   & Test                                                    & Train                                                   & Test                                                    & Train                              & Test   & Train  & Test   & Train  & Test   \\
                \cmidrule{2-3} \cmidrule{4-5} \cmidrule{6-7} \cmidrule{8-9} \cmidrule{10-11}
                \texttt{Naive}        & 5.16 $\pm$ 1.83                                                  & 5.45 $\pm$ 1.83                                                  & 2.94 $\pm$ 0.27                                                  & 3.26 $\pm$ 0.27                                                  & 2.12 $\pm$ 0.03                             & 2.40 $\pm$ 0.03 & 1.84 $\pm$ 0.18 & 2.12 $\pm$ 0.17 & 1.85 $\pm$ 0.00 & 2.31 $\pm$ 0.00 \\
                \texttt{Git Re-Basin} & 0.73 $\pm$ 0.16                                                  & 0.86 $\pm$ 0.17                                                  & \textbf{0.74 $\pm$ 0.35}                                         & 0.80 $\pm$ 0.40                                                  & 0.19 $\pm$ 0.03                             & 0.13 $\pm$ 0.02 & 0.17 $\pm$ 0.02 & 0.07 $\pm$ 0.02 & 0.08 $\pm$ 0.03 & 0.24 $\pm$ 0.03 \\
                \texttt{Frank-Wolfe}  & 0.73 $\pm$ 0.19                                                  & 0.85 $\pm$ 0.19                                                  & 0.78 $\pm$ 0.33                                                  & 0.81 $\pm$ 0.38                                                  & 0.19 $\pm$ 0.03                             & 0.12 $\pm$ 0.02 & 0.16 $\pm$ 0.02 & 0.06 $\pm$ 0.02 & 0.08 $\pm$ 0.03 & 0.25 $\pm$ 0.03 \\
                \midrule
            \end{tabular}
        }
    \end{center}
    \caption{Mean and standard deviation of the test and train loss barrier for each method when matching $n=2$ models on \texttt{CIFAR10}.}
    \label{tab:pairwise_barriers_table}
\end{table}


\subsubsection{ResNet with BatchNorm}\label{app:batch-norm}
\begin{wraptable}[8]{r}{0.4\textwidth}
    \vspace{-0.6cm}
    \begin{center}
        \resizebox{0.4\textwidth}{!}{%
            \begin{tabular}{lccc}
                \toprule
                \multirow{2}{*}{Matcher} & \multicolumn{2}{c}{loss barrier $(\downarrow)$}                            \\
                \cmidrule{2-3}
                                         & train                                           & test                     \\
                \midrule
                \texttt{Naive}           & 4.72 $\pm$ 0.86                                 & 4.99 $\pm$ 0.86          \\
                \texttt{Git Re-Basin}    & 4.33 $\pm$ 0.64                                 & 4.62 $\pm$ 0.65          \\
                \texttt{Frank-Wolfe}     & \textbf{3.53 $\pm$ 0.58}                        & \textbf{3.79 $\pm$ 0.57} \\
                \bottomrule
            \end{tabular}
        }
    \end{center}
    \caption{Mean and stddev of the test and train loss barriers on $2$ \texttt{ResNet20-2$\times$} models with \textit{BatchNorm} normalization.}
    \label{tab:ResNet-BatchNorm-pairwise}
\end{wraptable}
We also report the results of a ResNet20 with $2\times$ width using \emph{BatchNorm} \cite{batchnorm} layers instead of \emph{LayerNorm}~\cite{layernorm} ones. This version, as noted in~\cite{repair}, is in fact harder to match but also the one that is commonly used in practice. We can see in \cref{tab:ResNet-BatchNorm-pairwise} that the \texttt{Frank-Wolfe} matcher is able to achieve a lower barrier than \texttt{Git Re-Basin}, indicating the approach to be more robust to architectures using different normalization layers.

\subsection{Initialization strategies} \label{app:init-strategies}
As introduced in \cref{alg:frank-wolfe-generalized}, we initialize each $N$-dimensional permutation to be the $N \times N$ identity matrix. We now compare this strategy against two alternatives that provide doubly stochastic
\begin{wraptable}[13]{l}{0.3\textwidth}
    \centering
    \resizebox{0.3\textwidth}{!}{%
        \begin{tabular}{cccc}
            \toprule
            \multirow{2}{*}{models} & \multicolumn{3}{c}{loss barrier $(\downarrow)$}                                              \\
            \cmidrule{2-4}
                                    & id                                              & barycenter      & Sinkhorn                 \\
            \midrule
            (a, b)                  & $0.52$                                          & $\mathbf{0.47}$ & $0.60 \pm 0.04$          \\
            (b, c)                  & $0.65$                                          & $0.70$          & $\mathbf{0.64 \pm 0.06}$ \\
            (a, c)                  & $0.97$                                          & $0.95$          & $\mathbf{0.92 \pm 0.07}$ \\
            \bottomrule                                                                                                            \\
        \end{tabular}
    }
    \caption{Test barrier of the interpolations of 3 \texttt{ResNet20-2$\times$} models using different initializations.} \label{tab:init-strategies}
\end{wraptable}
matrices, \emph{i.e.}, such that their rows and columns sum to one: i) the Sinkhorn initialization~\cite{sinkhorn} that initializes the permutation matrix as the solution of the Sinkhorn-Knopp algorithm \cite{sinkhorn}; ii) the barycenter of doubly stochastic matrices, \emph{i.e.} the matrix where each element is given by $1/N$.\Cref{tab:init-strategies} shows the test barrier of the interpolations of three \texttt{ResNet20-2$\times$} models $a, b$, and $c$ when using the different strategies over 10 different trials.  We can see that the constant initializations (identity and barycenter) work well in general, with the additional benefit of having 0 variance in the results. On the other hand, if computational cost is not a concern, one can still choose to run a pool of trials with different Sinkhorn initializations and finally select the best one, trading this way efficiency with some extra accuracy points.

\subsection{Variance of the results in Git Re-Basin}
As introduced in \cref{subsec:exp-pairwise-matching-brief}, \texttt{Git Re-Basin}~\cite{git-rebasin} depends on a random choice of layers, resulting in variations of up to $10\%$ in accuracy depending on the optimization seed, while our method shows zero variance. While we have already seen the results for a model pair in \cref{fig:git-re-basin-variance}, we report, for completeness, the results of matching and averaging models with \texttt{Git Re-Basin} using different optimization seeds for additional pairs. As can be seen in \cref{tab:git-rebasin-variance}, the trend is confirmed over these ones, with results significantly oscillating and our approach always above or on par with their mean.

\begin{table}
    \begin{center}
        \resizebox{\textwidth}{!}{%
            \begin{tabular}{ccccccccccccccc}
                \toprule
                models                 &       & 1    & 2    & 3    & 4    & 5    & 6    & 7    & 8    & 9    & mean & stddev & max gap & Frank-Wolfe   \\
                \midrule
                \multirow{2}{*}{(1,2)} & train & 0.76 & 0.78 & 0.78 & 0.80 & 0.77 & 0.76 & 0.78 & 0.75 & 0.81 & 0.78 & 0.018  & 0.057   & 0.78          \\
                                       & test  & 0.73 & 0.75 & 0.75 & 0.77 & 0.74 & 0.73 & 0.74 & 0.72 & 0.78 & 0.74 & 0.018  & 0.060   & \textbf{0.75} \\
                \midrule
                \multirow{2}{*}{(1,3)} & train & 0.67 & 0.69 & 0.69 & 0.69 & 0.62 & 0.69 & 0.66 & 0.71 & 0.68 & 0.68 & 0.023  & 0.085   & 0.68          \\
                                       & test  & 0.64 & 0.66 & 0.67 & 0.65 & 0.60 & 0.66 & 0.63 & 0.67 & 0.65 & 0.65 & 0.020  & 0.071   & 0.65          \\
                \midrule
                \multirow{2}{*}{(2,3)} & train & 0.75 & 0.74 & 0.75 & 0.72 & 0.76 & 0.74 & 0.70 & 0.73 & 0.78 & 0.74 & 0.020  & 0.074   & \textbf{0.76} \\
                                       & test  & 0.70 & 0.71 & 0.71 & 0.68 & 0.72 & 0.70 & 0.67 & 0.70 & 0.74 & 0.70 & 0.020  & 0.071   & \textbf{0.72} \\
                \bottomrule
            \end{tabular}
        }
    \end{center}
    \caption{Accuracy of the interpolated model using \texttt{Git Re-Basin}~\cite{git-rebasin} over different pairs of models $(1,2), (1,3), (2,3)$ by changing random seed $i=1,\dots,9$ in the weight matching procedure.} \label{tab:git-rebasin-variance}
\end{table}


\subsection{Large-scale matching: ResNet50s trained over ImageNet}
\begin{wraptable}[13]{r}{0.3\textwidth}
    \begin{center}
        \resizebox{0.3\textwidth}{!}{%
            \begin{tabular}{ccc}
                \toprule
                \textbf{Matcher}             & \textbf{Accuracy} $(\uparrow)$ & \textbf{Loss} ($\downarrow$) \\
                \midrule
                \texttt{Naive}               & 0.001                          & 6.91                         \\
                \texttt{MergeMany}           & 0.001                          & 6.91                         \\
                \texttt{MergeMany}$^\dagger$ & 0.30                           & 4.87                         \\
                \texttt{$C^2M^3$}            & 0.001                          & 6.91                         \\
                \texttt{$C^2M^3$}$^\dagger$  & 0.07                           & 6.13                         \\
                \bottomrule
            \end{tabular}
        }
    \end{center}
    \caption{Accuracy and loss of the interpolated model using different matchers over three \texttt{ResNet50} models trained on \texttt{ImageNet}. }
    \label{tab:merge-many-resnet50}
\end{wraptable}
For this experiment, we matched three different \texttt{ResNet50}s trained over \texttt{ImageNet}. We used three publicly available pretrained checkpoints from \emph{timm}, namely \texttt{a1\_in1k}\footnote{\url{https://huggingface.co/timm/resnet50.a1_in1k}}, \texttt{c1\_1n1k}\footnote{\url{https://huggingface.co/timm/resnet50.c1_in1k}} and \texttt{ram\_in1k} \footnote{\url{https://huggingface.co/timm/resnet50.ram_in1k}}. As \cref{tab:merge-many-resnet50} shows, \texttt{$C^2M^3$} underperforms the baseline in this case. To see why, we report in \cref{fig:resnet50-heatmap} the pairwise accuracies obtained using pairwise weight matching over all the \texttt{ResNet50} checkpoints available in \textit{timm}. Let us focus on the triplet (am, a2, ram) and replace the model names with (a, b, c) for clarity. We see that, while the mergings (a, b) and (b, c) result in high-accuracy models, the merging (a, c) yields poor results.  Given the cycle consistency of our method, we inherit the difficulty of the hardest pair, which in this case is (a, c). It is worth noting that this behavior is not present in the other cases we investigated in this work, and might be due to the considered models being trained with different training schedules and hyperparameters. Future research could investigate new strategies to handle such cases, \emph{e.g.} by iteratively merging models by following a max-accuracy path in an accuracy weighted graph.

\begin{figure}
    \centering
    \includegraphics[width=0.8\textwidth]{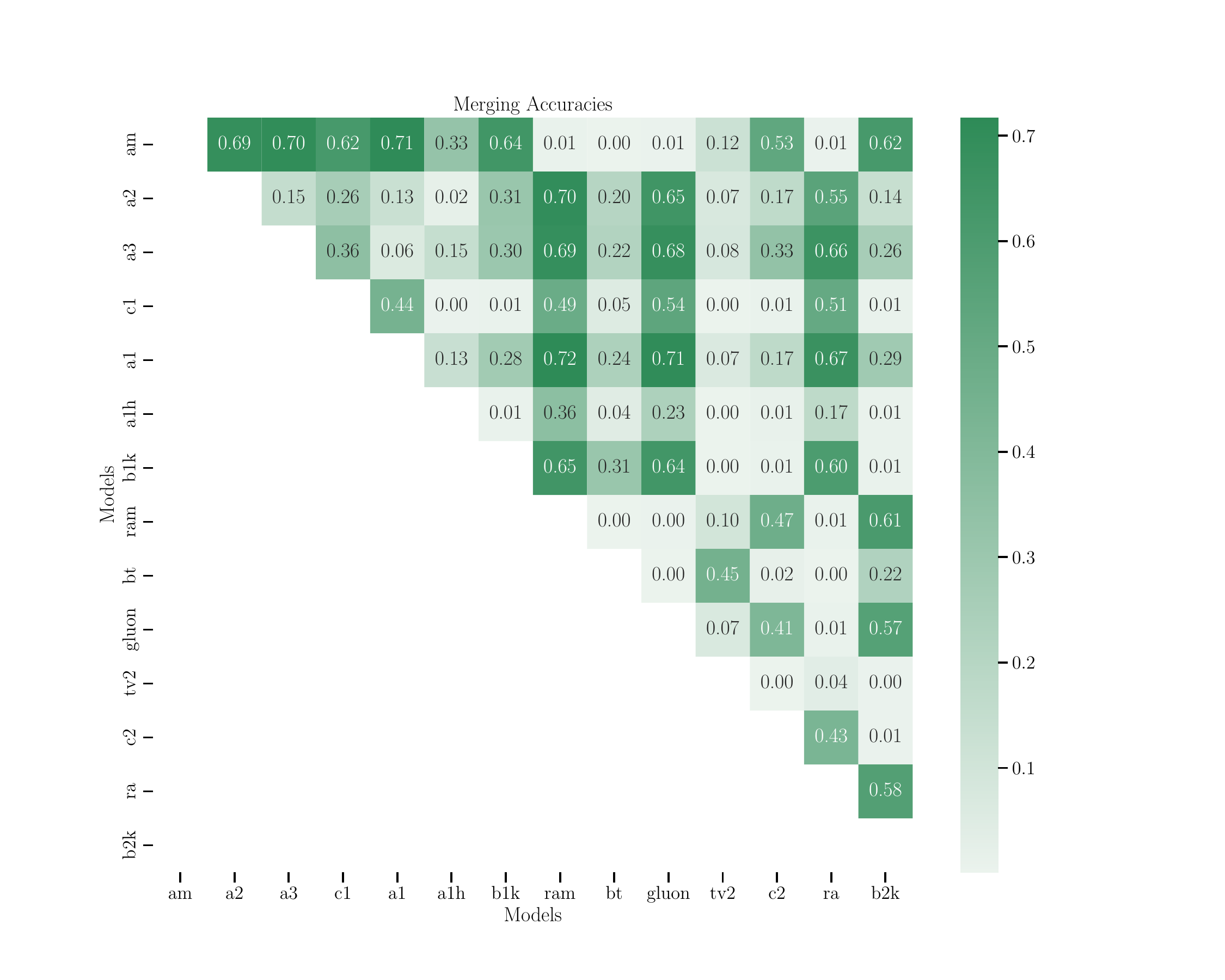}
    \caption{Pairwise accuracies obtained using \texttt{Git Re-Basin} \citep{git-rebasin} over different \texttt{ResNet50} models trained on \texttt{ImageNet}. Models are available from the \emph{timm} library.}
    \label{fig:resnet50-heatmap}
\end{figure}

\subsection{Federated Learning}
We here report the results of a preliminary experiment where we ran our framework in a federated learning scenario. To this end, we have used the state-of-the-art federated learning library Flower~\footnote{\url{https://flower.ai/}}~\citep{beutel2020flower} and employed our matching scheme over a set of 10 clients over \texttt{CIFAR10}, each adopting a small CNN model. We observe the following:
\begin{itemize}
    \item When all the clients start from the same initialization, our approach has no benefit and falls back to standard averaging. In fact, the optimization process quickly returns identity matrices as permutations, suggesting the models already share the same basin.
    \item When instead we initialize the clients from different random initializations, \cref{tab:federated-learning,tab:federated-learning-2} show that our approach visibly outperforms FedAVG. In particular, the benefits get more pronounced when increasing the number of local epochs. This is in line with the intuition that standard averaging becomes less effective when clients drift due to prolonged local training and too infrequent aggregation.
\end{itemize}

\begin{table}
    \begin{center}
        \resizebox{\textwidth}{!}{%
            \begin{tabular}{cccccccccccc}
                \toprule
                \multirow{2}{*}{Round} & \multicolumn{10}{c}{Accuracy}                                                                                                                                                                 \\
                \cmidrule{2-11}
                                       & 1                             & 5               & 10              & 15              & 20              & 25              & 30              & 35              & 40             & 45             \\
                \midrule
                FedAvg                 & 0.0942                        & 0.394           & 0.4972          & 0.5517          & 0.5699          & 0.5893          & 0.6018          & 0.6063          & 0.6099         & 0.6136         \\
                $C^2M^3$               & 0.0941                        & \textbf{0.4234} & \textbf{0.5193} & \textbf{0.5555} & \textbf{0.5783} & \textbf{0.5978} & \textbf{0.6077} & \textbf{0.6165} & \textbf{0.618} & \textbf{0.622} \\
                \bottomrule
            \end{tabular}
        }
    \end{center}
    \caption{Accuracy over 10 clients in a federated learning scenario. We report the accuracy for 50 aggregation rounds, with each client training for 20 local epochs. We report one every five rounds for the sake of clarity.}
    \label{tab:federated-learning}

\end{table}

\begin{table}
    \begin{center}
        \resizebox{\textwidth}{!}{%
            \begin{tabular}{cccccccccccc}
                \toprule
                \multirow{2}{*}{Round} & \multicolumn{10}{c}{Accuracy}                                                                                  \\
                \cmidrule{2-11}
                                       & 1                             & 2      & 3      & 4      & 5      & 6      & 7      & 8      & 9      & 10     \\
                \midrule
                FedAvg                 & 0.0942                        & 0.2638 & 0.3543 & 0.3825 & 0.4165 & 0.4505 & 0.4742 & 0.4994 & 0.5169 & 0.5317 \\
                $C^2M^3$               & \textbf{0.0947}                        & \textbf{0.3303} & \textbf{0.3899} & \textbf{0.4441} & \textbf{0.4764} & \textbf{0.4968} & \textbf{0.5184} & \textbf{0.5334} & \textbf{0.5434} & \textbf{0.5536} \\
                \bottomrule
            \end{tabular}
        }
    \end{center}
    \caption{Accuracy over 10 clients in a federated learning scenario. We report the accuracy for 10 aggregation rounds, with each client training for 30 local epochs.}
    \label{tab:federated-learning-2}
\end{table}
While these results are not sufficient to claim an overall supremacy of the approach for the task due to the limited evaluation and choice of models, they show the approach to be promising for the problem and encourage further research.

\section{Additional analysis}\label{appendix-c}
In this section, we report additional analyses that complement the results presented in the main text. We first analyze in \cref{subsec:exp-similarity} how mapping to universe affects the similarity of the models; then, we evaluate how the composition of the match set affects the accuracy of the merged model in \cref{app:subsets}.

\subsection{Similarity of models}\label{subsec:exp-similarity}
We analyze here how similar are models before and after being mapped to the universe space, first by comparing their representations and then by comparing their weights.
\subsubsection{Representation-level similarity}
\Cref{fig:repr-cka-similarities-orig,fig:repr-cka-similarities-univ} show the Centered Kernel Alignment (CKA) \cite{Kornblith2019-vr} of the representations of 5 \texttt{ResNet20} models trained on \texttt{CIFAR10} with $2\times$ width. The linear version of CKA is defined as
\begin{equation}
    \label{eq:cka}
    \operatorname{CKA}(X,Y) = \frac{\operatorname{HSIC}(X, Y)}{\sqrt{\operatorname{HSIC}(X, X) \operatorname{HSIC}(Y, Y)}},
\end{equation}
where $\operatorname{HSIC}(X, Y) = \frac{1}{(N-1)^2} \operatorname{tr}(\mathbf{X} \mathbf{H} \mathbf{X}^{\top} \mathbf{H})$, $\mathbf{H} = \mathbf{I} - \frac{1}{N} \mathbf{1}\mathbf{1}^\top$ is a centering matrix, and $\mathbf{1}$ is a vector of $N$ ones.
The denominator is introduced to scale CKA between zero and one, where a value of one indicates equivalent representations. CKA is invariant to orthogonal transformations and isotropic scaling. Being permutations orthogonal transformations, CKA stays exactly the same after mapping the models to the universe. On the contrary, the Euclidean distance of the representations of the models significantly decreases after mapping to the universe, as shown in \cref{fig:repr-eucl-similarities-orig,fig:repr-eucl-similarities-univ}.
\begin{figure}

    \begin{subfigure}{0.4\textwidth}
        \centering
        \includegraphics[width=\textwidth]{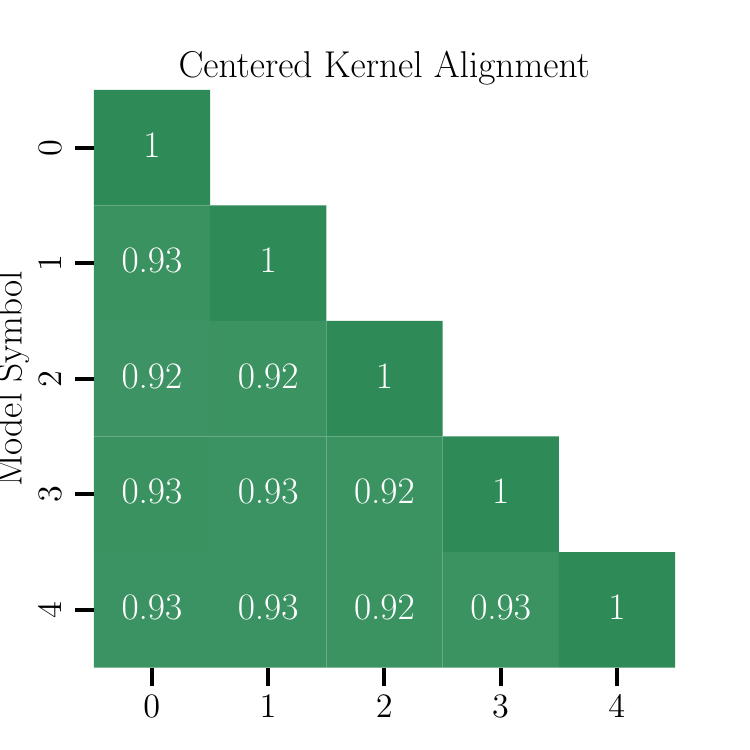}
        \caption{Before mapping to universe.}
        \label{fig:repr-cka-similarities-orig}
    \end{subfigure}
    \hfill
    \begin{subfigure}{0.4\textwidth}
        \centering
        \includegraphics[width=\textwidth]{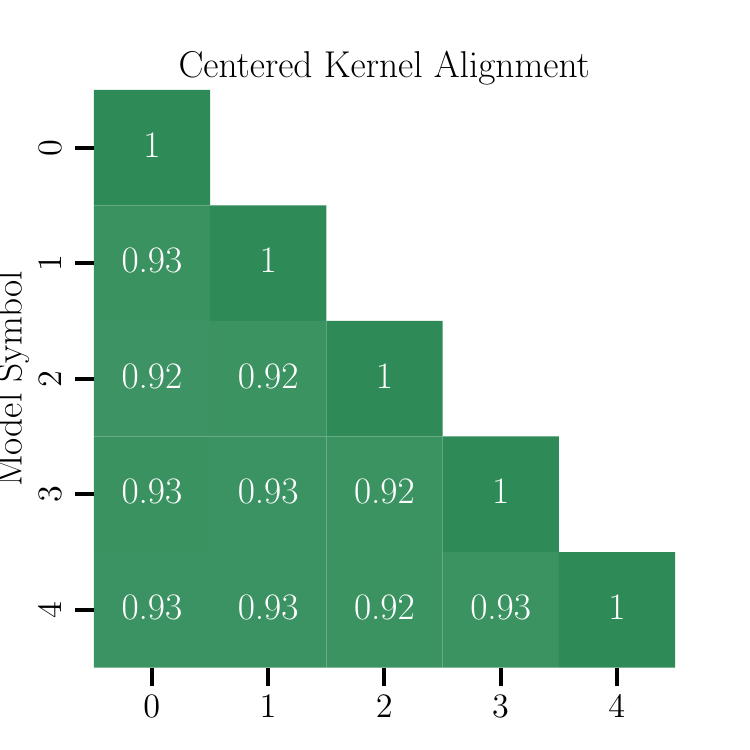}
        \caption{After mapping to universe.}
        \label{fig:repr-cka-similarities-univ}
    \end{subfigure}

    \begin{subfigure}{0.4\textwidth}
        \centering
        \includegraphics[width=\textwidth]{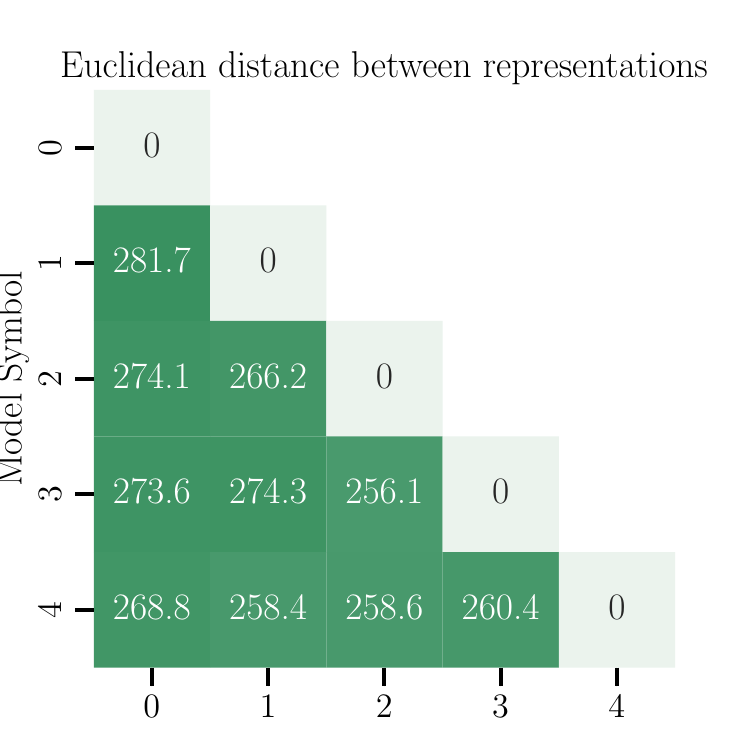}
        \caption{Before mapping to universe.}
        \label{fig:repr-eucl-similarities-orig}
    \end{subfigure}
    \hfill
    \begin{subfigure}{0.4\textwidth}
        \centering
        \includegraphics[width=\textwidth]{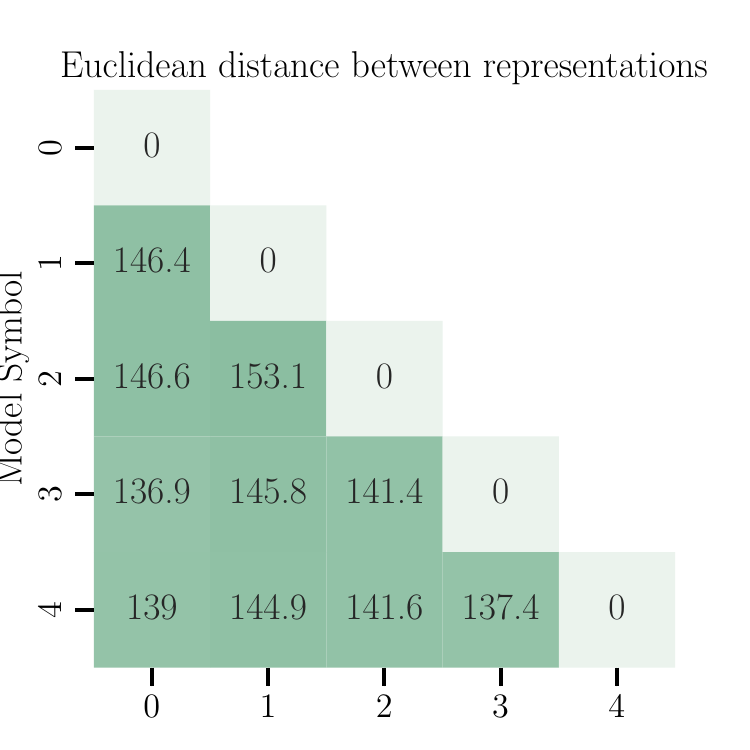}
        \caption{After mapping to universe.}
        \label{fig:repr-eucl-similarities-univ}
    \end{subfigure}

    \caption{Cented Kernel Alignment and Euclidean distances of the representations of 5 \texttt{ResNet20} trained on \texttt{CIFAR10} with $2\times$ width.}
    \label{fig:repr-euclidean}
\end{figure}

\subsubsection{Weight-level similarity}

\label{subsec:exp-weight-similarity}

\begin{figure}
    \centering

    \begin{subfigure}{0.4\textwidth}
        \centering
        \includegraphics[width=\textwidth]{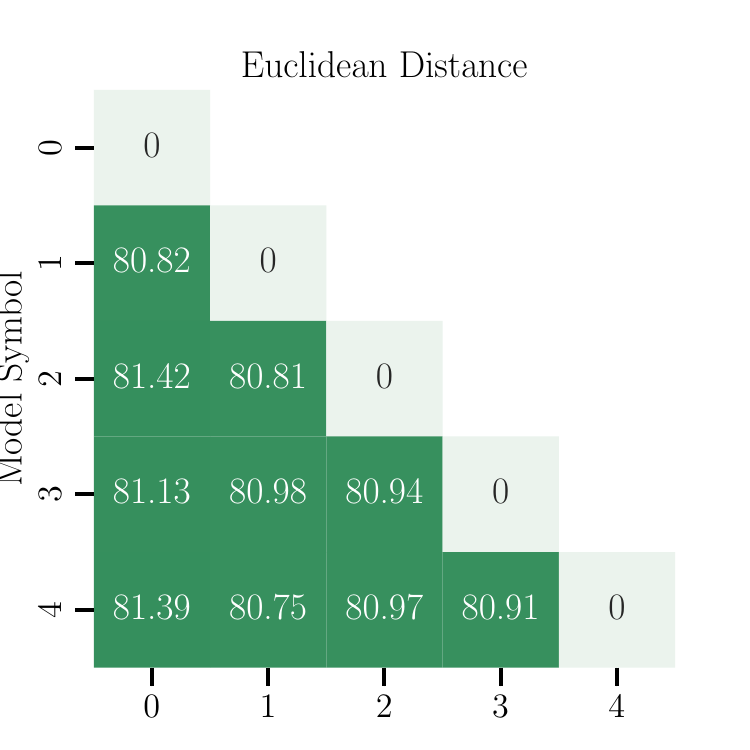}
        \caption{Before mapping to universe.}
        \label{fig:weights-eucl-distance-orig}
    \end{subfigure}
    \hfill
    \begin{subfigure}{0.4\textwidth}
        \centering
        \includegraphics[width=\textwidth]{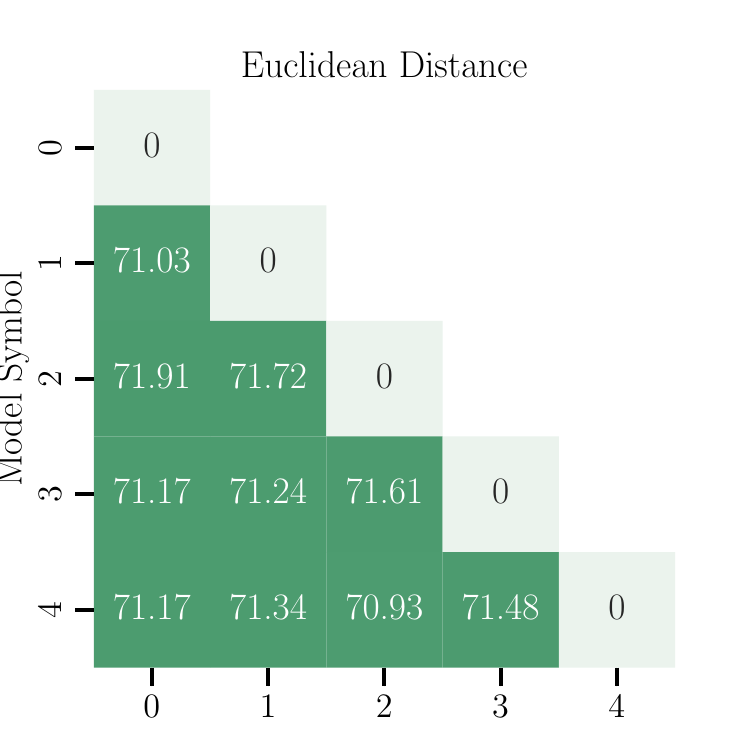}
        \caption{After mapping to universe.}
        \label{fig:weights-eucl-distance-univ}
    \end{subfigure}

    \caption{Euclidean distance of the weights of 5 \texttt{ResNet20} trained on \texttt{CIFAR10} with $2\times$ width.}
    \label{fig:weights-eucl-similarities}
\end{figure}
We have seen in \cref{fig:weights-cos-similarities} that the cosine similarity of the weights is higher after mapping the weights to the universe. This suggests that the models are more similar in the universe, which is consistent with the fact that it constitutes a convenient space to merge them. We report here for completeness the \cref{fig:weights-eucl-similarities} the Euclidean distance of the weights of 5 \texttt{ResNet20} models trained on \texttt{CIFAR10} with $2\times$ width, showing the same trend as the cosine similarity.

\subsection{Merging different subsets}\label{app:subsets}
We merge subsets of $k < 5$ models from the set of $5$ models $a,b,c,d,e$ to gauge the effect of the match set composition over the accuracy of the merged model. As shown in \cref{fig:accuracy-model-subsets}, we run two different merging schemes: in the former (left column), we globally match all the $5$ models jointly and then consider subsets only at the aggregation step. In the second analysis (right column), we instead consider model subsets from the start, therefore running the whole matching procedure on the $k$ models before averaging them. This way, we aim to disentangle the error resulting from imperfect matching from the one naturally resulting from the aggregation. 
We highlight a few notable aspects: 
\begin{enumerate}
    \item While the accuracies are expectedly higher when matching a subset with permutations expressly optimized for that same subset (right column), this is not the case for $n=2$, in which the permutations resulting from matching the superset of $5$ models yield better results when merging pairs of them. This hints at the added constraint of cycle consistency over a wide number of models adding in some cases an advisable prior over the search space.
    \item The particular composition of the match set has a significant impact over the matching and subsequent merge operation, yielding differences of up to $\approx 20$ accuracy points for the downstream model.
    \item The standard deviations before the repair operation (red semi-transparent bars in the plots) are way lower when optimizing for the permutations over the superset of all $5$ models; this suggests that the matching difficulty is spread over all the maps jointly, eventually yielding more stable results. 
\end{enumerate}

\begin{figure}
    %
    \begin{subfigure}{0.48\textwidth}
        \centering
        \includegraphics[width=\textwidth]{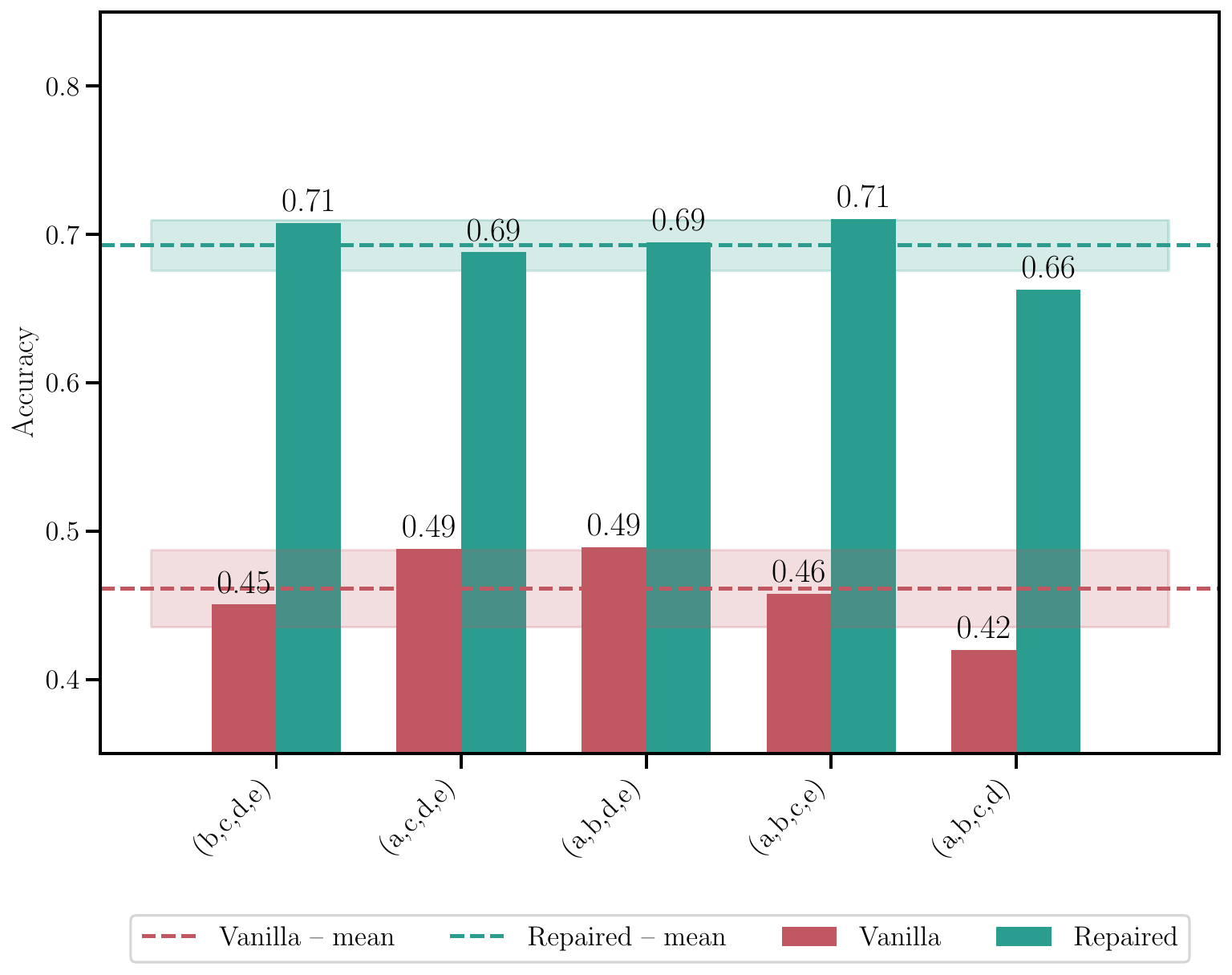}
        \caption{Subsets of $4$ out of $5$ jointly matched models.}
        \label{fig:accuracy-model-4-subsets-not-matched}
    \end{subfigure}
    \hfill
    \begin{subfigure}{0.48\textwidth}
        \centering
        \includegraphics[width=\textwidth]{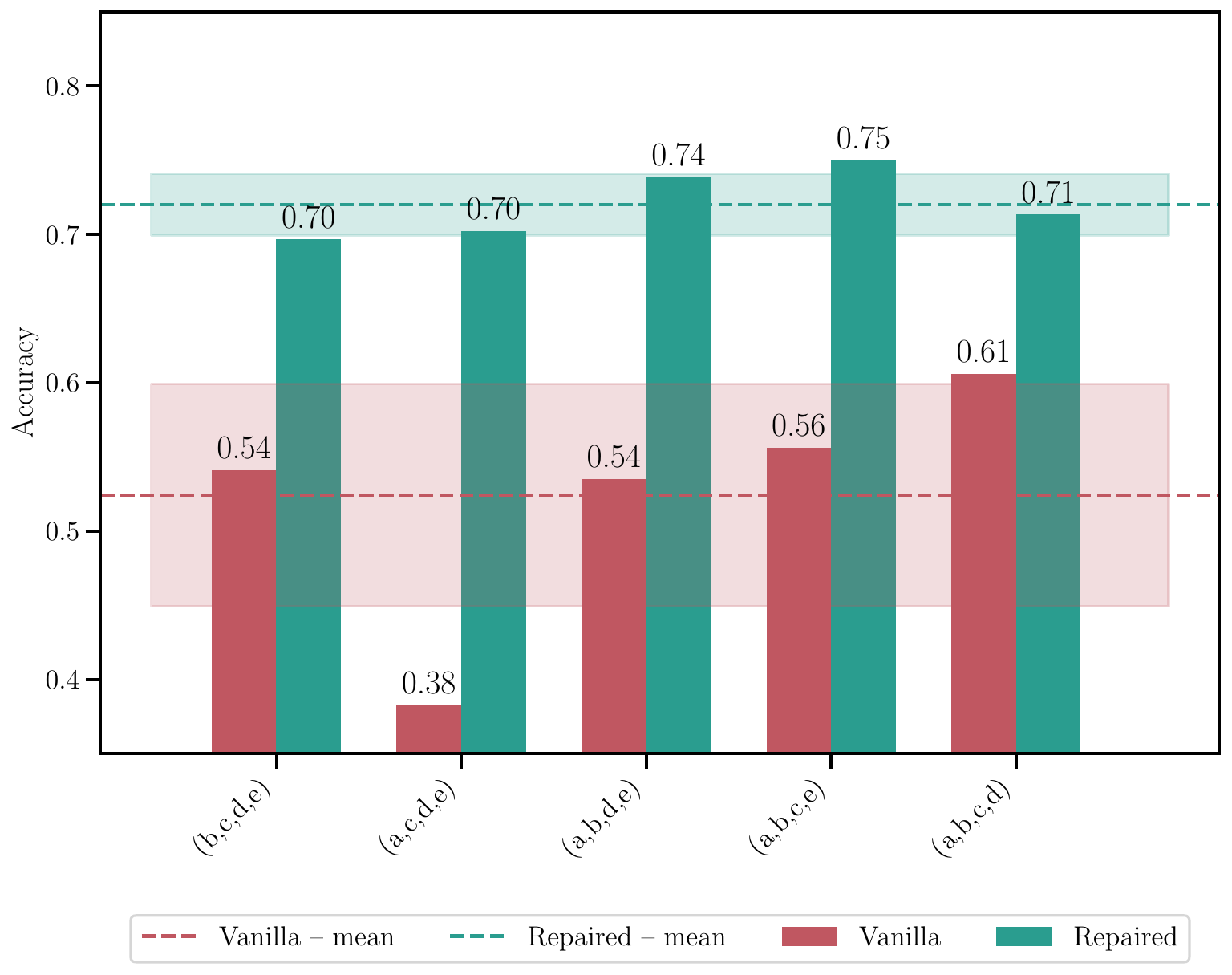}
        \caption{Subsets of $4$ matched models out of $5$ models.}
        \label{fig:accuracy-model-4-subsets-matched}
    \end{subfigure}

    \vspace{0.5cm}

    \begin{subfigure}{0.48\textwidth}
        \centering
        \includegraphics[width=\textwidth]{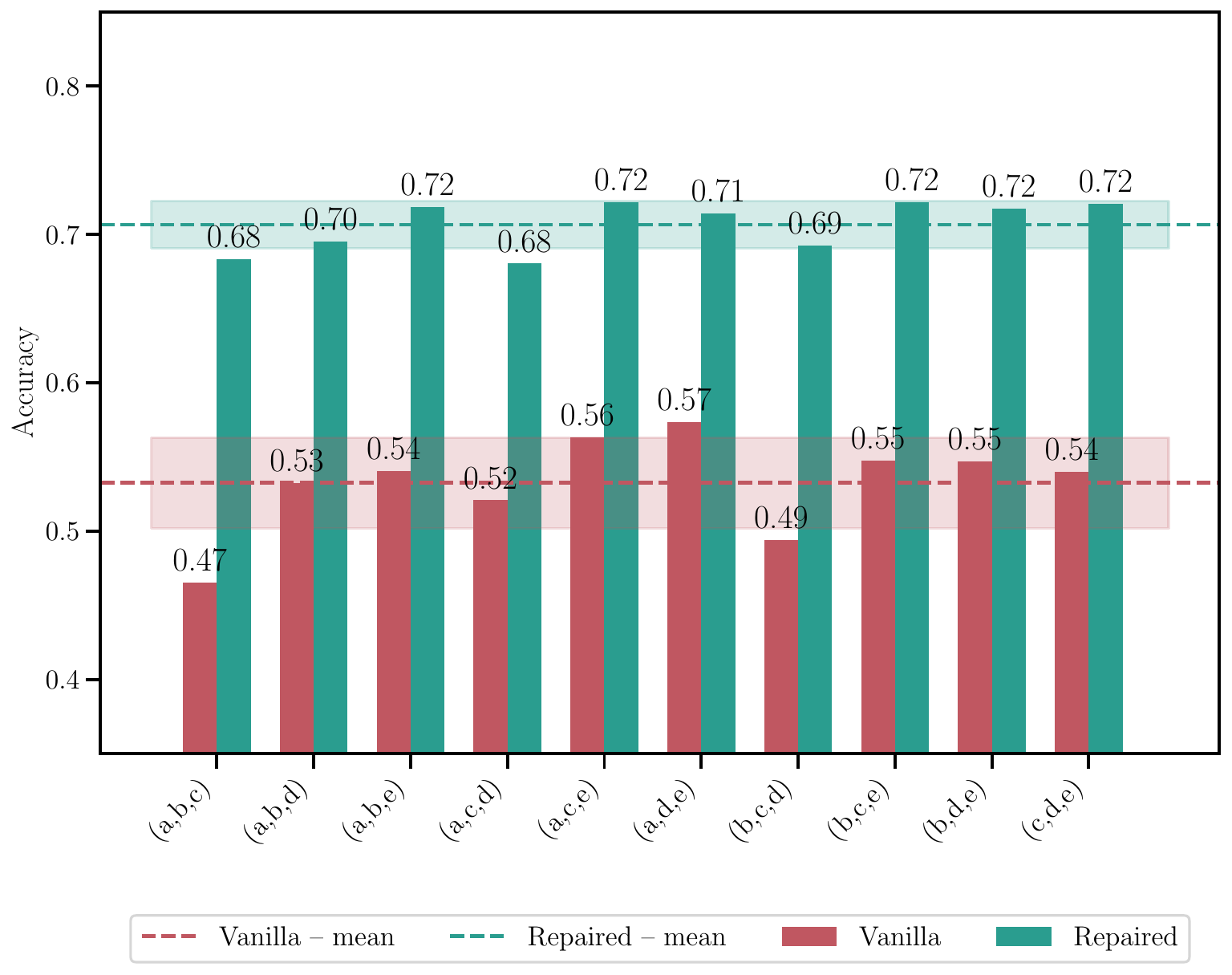}
        \caption{Subsets of $3$ out of $5$ jointly matched models.}
        \label{fig:accuracy-model-3-subsets-not-matched}
    \end{subfigure}
    \hfill
    \begin{subfigure}{0.48\textwidth}
        \centering
        \includegraphics[width=\textwidth]{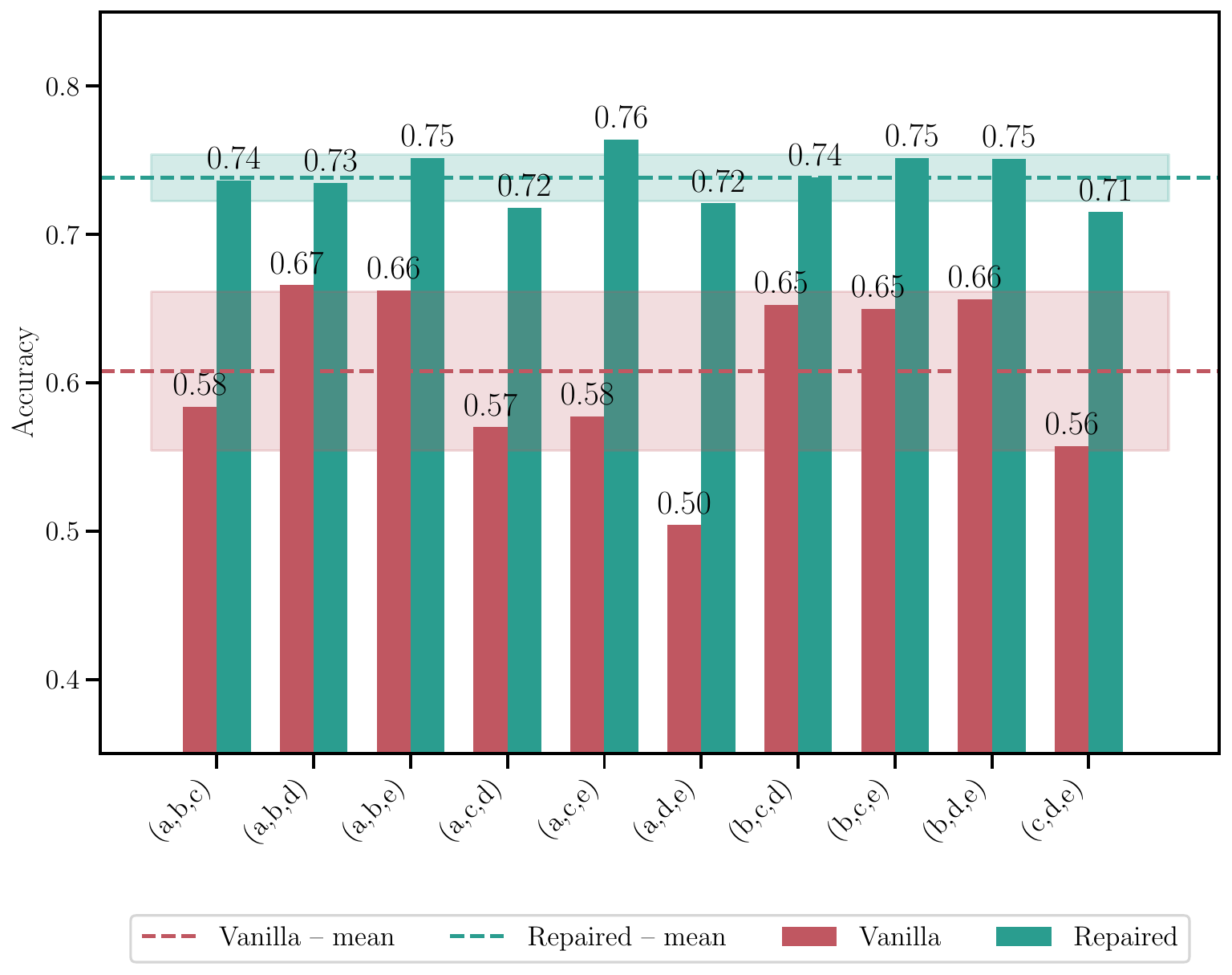}
        \caption{Subsets of $3$ matched models out of $5$ models.}
        \label{fig:accuracy-model-3-subsets-matched}
    \end{subfigure}

    \vspace{0.5cm}

    \begin{subfigure}{0.48\textwidth}
        \centering
        \includegraphics[width=\textwidth]{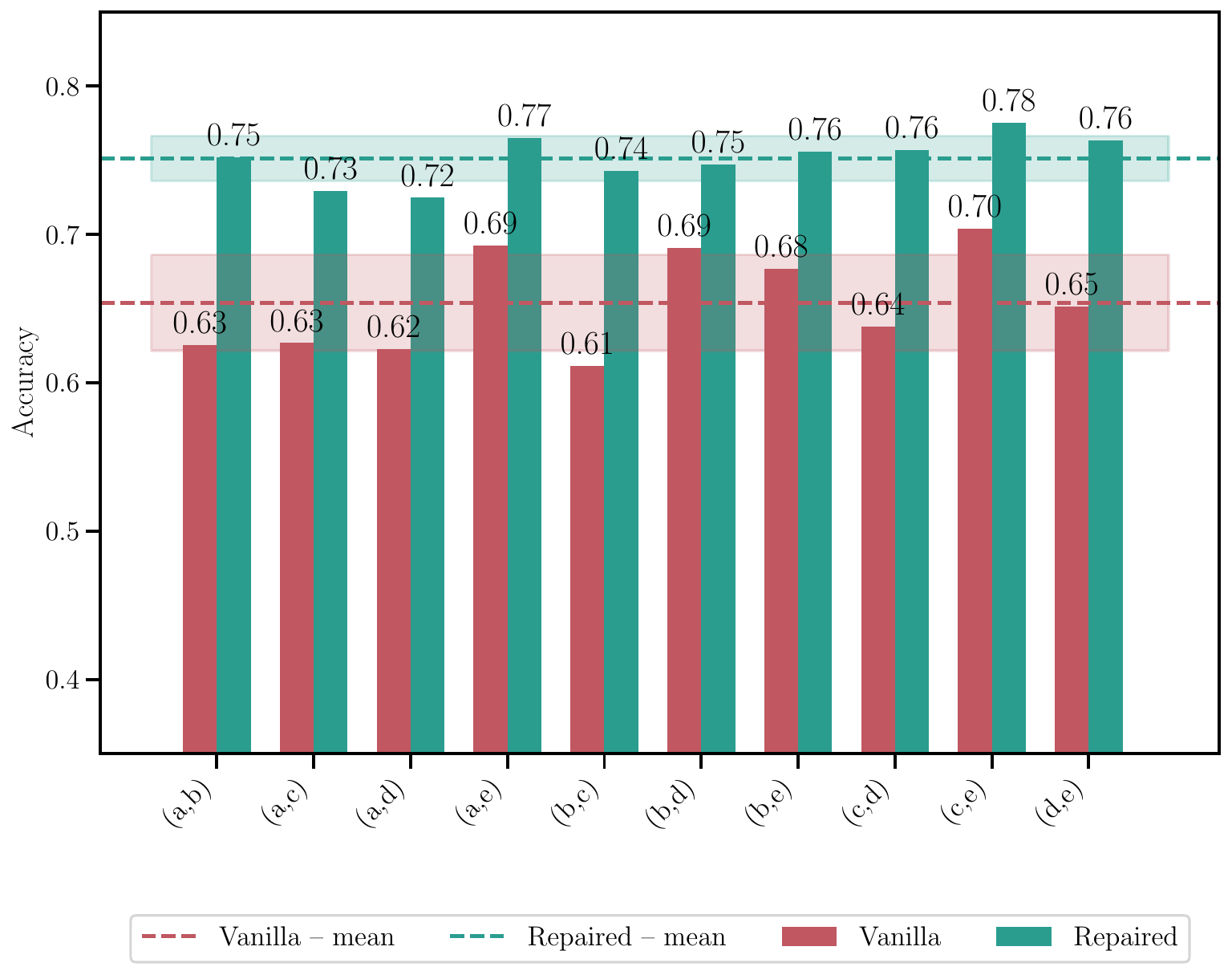}
        \caption{Subsets of $2$ out of $5$ jointly matched models.}
        \label{fig:accuracy-model-2-subsets-not-matched}
    \end{subfigure}
    \hfill
    \begin{subfigure}{0.48\textwidth}
        \centering
        \includegraphics[width=\textwidth]{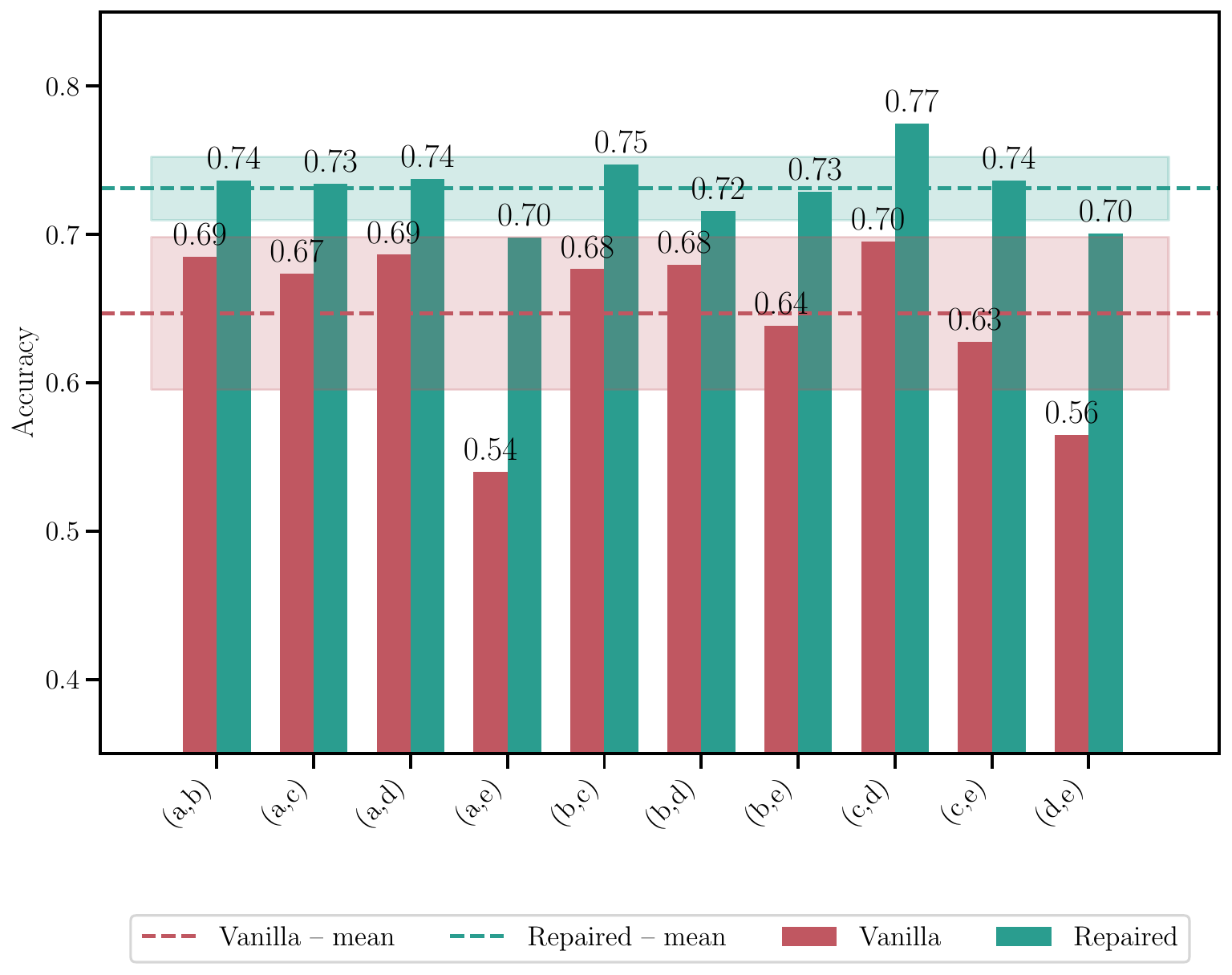}
        \caption{Subsets of $2$ matched models out of $5$ models.}
        \label{fig:accuracy-model-2-subsets-matched}
    \end{subfigure}
    \caption{Accuracy of the resulting model when merging different model subsets.  \textbf{(left)} performance of models obtained from aggregating subsets of $k < 5$ models that were matched jointly. \textbf{(right)} analoguous results for subsets of $k$ models that are instead matched independently, \emph{i.e.}, by only optimizing for the permutations that align those $k$ models and discarding the remaining ones. The semi-transparent bands represent the standard deviation of the accuracy.} \label{fig:accuracy-model-subsets}
\end{figure}

\section{Discussion}\label{appendix-d}

We discuss in this section the limitations of our work, as well as potential future societal impact. 

\subsection{On the cycle-consistency of $C^2M^3$}
Our method is natively cycle-consistent due to the mathematical formulation of the optimization problem. If we were to not desire cycle consistency, the matching method would fall back to the $n=2$ Frank-Wolfe (FW) case presented in \cref{subsec:pairwise-matching}. One would then have to transform the pairwise matching problem to a $n$-way matching problem, \emph{e.g.} by using the $n=2$ FW procedure as matching step in the \texttt{MergeMany}~\citep{git-rebasin} algorithm. Results for the $n=2$ FW matching are reported in \cref{tab:pairwise_barriers_table}.

\subsection{Limitations}
From what we have observed in our experiments, permutations satisfying linear mode connectivity of the models are hard to find for most architectures and datasets. In fact, given that there is no practical way to prove or disprove the conjecture for which most models end up in the same basin modulo permutations of the neurons, we cannot be sure that a certain set of models even allows finding such permutations, let alone that the permutations found are the optimal ones. We therefore encourage the community not to rely on the existence of such permutations in general. However, we have also shown that we can always find permutations that improve the resulting aggregated model, which is a promising practical result for model merging. As for all the existing works concerning linear mode connectivity and model merging, the resulting models that we obtain are sensible to a wide variety of factors, from training hyperparameters to the optimization algorithm used. Several works have already observed the phenomenon in practice: among these, \citet{git-rebasin} mention among the known failure modes of their approaches models trained with SGD and too low learning rate, or ADAM coupled with too high learning rate. \citet{repair} show that the chosen normalization layer incredibly affects the accuracy of the resulting merged model, while \citet{qu2024rethinkingmodelrebasinlinear} observe learning rate, weight decay, and initialization method to play a strong role as well.
Being a mostly empirical field, most of the technical choices that we make in our work mirror the ones made in previous works and are not based on a solid theoretical foundation. We therefore release all our code and encourage the community to investigate further on what training and optimization hyperparameters effect linear mode connectivity and model merging.

\subsection{Societal impact and broader vision}
The work presented in this paper serves as an additional tool for the community to improve the efficiency of deep learning models. By merging models, we can reduce the computational cost of training and inference, as well as the memory footprint of the models. In fact, by aggregating the information of a set of models into a single one with the same architecture, practitioners can benefit of the effects of ensembling without incurring in its computational cost. Moreover, merging is in many cases a practical necessity to guarantee confidentiality and privacy of user data, as it allows to train models on different subsets of the data, \emph{e.g.} originating from different clients, and then merge them to obtain a single model integrating all the information. This is particularly important in the context of federated learning, where the data is distributed among different clients and cannot be shared. We believe that the work presented in this paper can be a stepping stone towards more efficient and privacy-preserving deep learning models, and we encourage the community to further investigate the potential of model merging in these contexts.

\newpage
\section*{NeurIPS Paper Checklist}
\begin{enumerate}

\item {\bf Claims}
    \item[] Question: Do the main claims made in the abstract and introduction accurately reflect the paper's contributions and scope?
    \item[] Answer: \answerYes{} 
    \item[] Justification: All the claims are supported by experiments and sound reasoning.
    \item[] Guidelines:
    \begin{itemize}
        \item The answer NA means that the abstract and introduction do not include the claims made in the paper.
        \item The abstract and/or introduction should clearly state the claims made, including the contributions made in the paper and important assumptions and limitations. A No or NA answer to this question will not be perceived well by the reviewers. 
        \item The claims made should match theoretical and experimental results, and reflect how much the results can be expected to generalize to other settings. 
        \item It is fine to include aspirational goals as motivation as long as it is clear that these goals are not attained by the paper. 
    \end{itemize}

\item {\bf Limitations}
    \item[] Question: Does the paper discuss the limitations of the work performed by the authors?
    \item[] Answer: \answerYes{} 
    \item[] Justification: We discuss limitations and assumptions of our work in Appendix D. 
    \item[] Guidelines:
    \begin{itemize}
        \item The answer NA means that the paper has no limitation while the answer No means that the paper has limitations, but those are not discussed in the paper. 
        \item The authors are encouraged to create a separate "Limitations" section in their paper.
        \item The paper should point out any strong assumptions and how robust the results are to violations of these assumptions (e.g., independence assumptions, noiseless settings, model well-specification, asymptotic approximations only holding locally). The authors should reflect on how these assumptions might be violated in practice and what the implications would be.
        \item The authors should reflect on the scope of the claims made, e.g., if the approach was only tested on a few datasets or with a few runs. In general, empirical results often depend on implicit assumptions, which should be articulated.
        \item The authors should reflect on the factors that influence the performance of the approach. For example, a facial recognition algorithm may perform poorly when image resolution is low or images are taken in low lighting. Or a speech-to-text system might not be used reliably to provide closed captions for online lectures because it fails to handle technical jargon.
        \item The authors should discuss the computational efficiency of the proposed algorithms and how they scale with dataset size.
        \item If applicable, the authors should discuss possible limitations of their approach to address problems of privacy and fairness.
        \item While the authors might fear that complete honesty about limitations might be used by reviewers as grounds for rejection, a worse outcome might be that reviewers discover limitations that aren't acknowledged in the paper. The authors should use their best judgment and recognize that individual actions in favor of transparency play an important role in developing norms that preserve the integrity of the community. Reviewers will be specifically instructed to not penalize honesty concerning limitations.
    \end{itemize}

\item {\bf Theory Assumptions and Proofs}
    \item[] Question: For each theoretical result, does the paper provide the full set of assumptions and a complete (and correct) proof?
    \item[] Answer: \answerYes{} 
    \item[] Justification: The paper mostly leverages existing theoretical results and properly cites each of these ones.
    \item[] Guidelines:
    \begin{itemize}
        \item The answer NA means that the paper does not include theoretical results. 
        \item All the theorems, formulas, and proofs in the paper should be numbered and cross-referenced.
        \item All assumptions should be clearly stated or referenced in the statement of any theorems.
        \item The proofs can either appear in the main paper or the supplemental material, but if they appear in the supplemental material, the authors are encouraged to provide a short proof sketch to provide intuition. 
        \item Inversely, any informal proof provided in the core of the paper should be complemented by formal proofs provided in appendix or supplemental material.
        \item Theorems and Lemmas that the proof relies upon should be properly referenced. 
    \end{itemize}

    \item {\bf Experimental Result Reproducibility}
    \item[] Question: Does the paper fully disclose all the information needed to reproduce the main experimental results of the paper to the extent that it affects the main claims and/or conclusions of the paper (regardless of whether the code and data are provided or not)?
    \item[] Answer: \answerYes{} 
    \item[] Justification: We describe in great detail all of the presented algorithms and we report all the technical details for the architectures. We also provide a modular and reusable codebase that respects the highest software engineering standards with configuration parameters provided as separate yaml files. All the experiments were performed by setting reproducible seeds that are logged in WandB.
    \item[] Guidelines:
    \begin{itemize}
        \item The answer NA means that the paper does not include experiments.
        \item If the paper includes experiments, a No answer to this question will not be perceived well by the reviewers: Making the paper reproducible is important, regardless of whether the code and data are provided or not.
        \item If the contribution is a dataset and/or model, the authors should describe the steps taken to make their results reproducible or verifiable. 
        \item Depending on the contribution, reproducibility can be accomplished in various ways. For example, if the contribution is a novel architecture, describing the architecture fully might suffice, or if the contribution is a specific model and empirical evaluation, it may be necessary to either make it possible for others to replicate the model with the same dataset, or provide access to the model. In general. releasing code and data is often one good way to accomplish this, but reproducibility can also be provided via detailed instructions for how to replicate the results, access to a hosted model (e.g., in the case of a large language model), releasing of a model checkpoint, or other means that are appropriate to the research performed.
        \item While NeurIPS does not require releasing code, the conference does require all submissions to provide some reasonable avenue for reproducibility, which may depend on the nature of the contribution. For example
        \begin{enumerate}
            \item If the contribution is primarily a new algorithm, the paper should make it clear how to reproduce that algorithm.
            \item If the contribution is primarily a new model architecture, the paper should describe the architecture clearly and fully.
            \item If the contribution is a new model (e.g., a large language model), then there should either be a way to access this model for reproducing the results or a way to reproduce the model (e.g., with an open-source dataset or instructions for how to construct the dataset).
            \item We recognize that reproducibility may be tricky in some cases, in which case authors are welcome to describe the particular way they provide for reproducibility. In the case of closed-source models, it may be that access to the model is limited in some way (e.g., to registered users), but it should be possible for other researchers to have some path to reproducing or verifying the results.
        \end{enumerate}
    \end{itemize}

\item {\bf Open access to data and code}
    \item[] Question: Does the paper provide open access to the data and code, with sufficient instructions to faithfully reproduce the main experimental results, as described in supplemental material?
    \item[] Answer: \answerYes{} 
    \item[] Justification: We provide a modular and reusable codebase that respects the highest software engineering standards with configuration parameters provided as separate yaml files. Code is highly reproducible and machine agnostic due to the extensive use of frameworks and libraries such as PyTorch Lightning, WandB, Hydra and NN-template.
    \item[] Guidelines:
    \begin{itemize}
        \item The answer NA means that paper does not include experiments requiring code.
        \item Please see the NeurIPS code and data submission guidelines (\url{https://nips.cc/public/guides/CodeSubmissionPolicy}) for more details.
        \item While we encourage the release of code and data, we understand that this might not be possible, so “No” is an acceptable answer. Papers cannot be rejected simply for not including code, unless this is central to the contribution (e.g., for a new open-source benchmark).
        \item The instructions should contain the exact command and environment needed to run to reproduce the results. See the NeurIPS code and data submission guidelines (\url{https://nips.cc/public/guides/CodeSubmissionPolicy}) for more details.
        \item The authors should provide instructions on data access and preparation, including how to access the raw data, preprocessed data, intermediate data, and generated data, etc.
        \item The authors should provide scripts to reproduce all experimental results for the new proposed method and baselines. If only a subset of experiments are reproducible, they should state which ones are omitted from the script and why.
        \item At submission time, to preserve anonymity, the authors should release anonymized versions (if applicable).
        \item Providing as much information as possible in supplemental material (appended to the paper) is recommended, but including URLs to data and code is permitted.
    \end{itemize}

\item {\bf Experimental Setting/Details}
    \item[] Question: Does the paper specify all the training and test details (e.g., data splits, hyperparameters, how they were chosen, type of optimizer, etc.) necessary to understand the results?
    \item[] Answer: \answerYes{} 
    \item[] Justification: The most important details are provided in the paper, with the yaml configuration files in the code providing all the remaining minor details. 
    \item[] Guidelines:
    \begin{itemize}
        \item The answer NA means that the paper does not include experiments.
        \item The experimental setting should be presented in the core of the paper to a level of detail that is necessary to appreciate the results and make sense of them.
        \item The full details can be provided either with the code, in appendix, or as supplemental material.
    \end{itemize}

\item {\bf Experiment Statistical Significance}
    \item[] Question: Does the paper report error bars suitably and correctly defined or other appropriate information about the statistical significance of the experiments?
    \item[] Answer: \answerYes{} 
    \item[] Justification: We report means and standard deviations for all our experiments when variance is present. The only entries that do not have standard deviation are those that result in a deterministic result that is not affected by random seed.
    \item[] Guidelines:
    \begin{itemize}
        \item The answer NA means that the paper does not include experiments.
        \item The authors should answer "Yes" if the results are accompanied by error bars, confidence intervals, or statistical significance tests, at least for the experiments that support the main claims of the paper.
        \item The factors of variability that the error bars are capturing should be clearly stated (for example, train/test split, initialization, random drawing of some parameter, or overall run with given experimental conditions).
        \item The method for calculating the error bars should be explained (closed form formula, call to a library function, bootstrap, etc.)
        \item The assumptions made should be given (e.g., Normally distributed errors).
        \item It should be clear whether the error bar is the standard deviation or the standard error of the mean.
        \item It is OK to report 1-sigma error bars, but one should state it. The authors should preferably report a 2-sigma error bar than state that they have a 96\% CI, if the hypothesis of Normality of errors is not verified.
        \item For asymmetric distributions, the authors should be careful not to show in tables or figures symmetric error bars that would yield results that are out of range (e.g. negative error rates).
        \item If error bars are reported in tables or plots, The authors should explain in the text how they were calculated and reference the corresponding figures or tables in the text.
    \end{itemize}

\item {\bf Experiments Compute Resources}
    \item[] Question: For each experiment, does the paper provide sufficient information on the computer resources (type of compute workers, memory, time of execution) needed to reproduce the experiments?
    \item[] Answer: \answerYes{} 
    \item[] Justification: we reported in Appendix A the equipment used for the experiments, as well as a discussion on the efficiency of our methods. 
    \item[] Guidelines:
    \begin{itemize}
        \item The answer NA means that the paper does not include experiments.
        \item The paper should indicate the type of compute workers CPU or GPU, internal cluster, or cloud provider, including relevant memory and storage.
        \item The paper should provide the amount of compute required for each of the individual experimental runs as well as estimate the total compute. 
        \item The paper should disclose whether the full research project required more compute than the experiments reported in the paper (e.g., preliminary or failed experiments that didn't make it into the paper). 
    \end{itemize}
    
\item {\bf Code Of Ethics}
    \item[] Question: Does the research conducted in the paper conform, in every respect, with the NeurIPS Code of Ethics \url{https://neurips.cc/public/EthicsGuidelines}?
    \item[] Answer: \answerYes{} 
    \item[] Justification: Our research totally complies with the NeurIPS Code of Ethics. 
    \item[] Guidelines:
    \begin{itemize}
        \item The answer NA means that the authors have not reviewed the NeurIPS Code of Ethics.
        \item If the authors answer No, they should explain the special circumstances that require a deviation from the Code of Ethics.
        \item The authors should make sure to preserve anonymity (e.g., if there is a special consideration due to laws or regulations in their jurisdiction).
    \end{itemize}

\item {\bf Broader Impacts}
    \item[] Question: Does the paper discuss both potential positive societal impacts and negative societal impacts of the work performed?
    \item[] Answer: \answerYes{} 
    \item[] Justification: We discuss in Appendix D the potential positive societal impact of our work.
    \item[] Guidelines:
    \begin{itemize}
        \item The answer NA means that there is no societal impact of the work performed.
        \item If the authors answer NA or No, they should explain why their work has no societal impact or why the paper does not address societal impact.
        \item Examples of negative societal impacts include potential malicious or unintended uses (e.g., disinformation, generating fake profiles, surveillance), fairness considerations (e.g., deployment of technologies that could make decisions that unfairly impact specific groups), privacy considerations, and security considerations.
        \item The conference expects that many papers will be foundational research and not tied to particular applications, let alone deployments. However, if there is a direct path to any negative applications, the authors should point it out. For example, it is legitimate to point out that an improvement in the quality of generative models could be used to generate deepfakes for disinformation. On the other hand, it is not needed to point out that a generic algorithm for optimizing neural networks could enable people to train models that generate Deepfakes faster.
        \item The authors should consider possible harms that could arise when the technology is being used as intended and functioning correctly, harms that could arise when the technology is being used as intended but gives incorrect results, and harms following from (intentional or unintentional) misuse of the technology.
        \item If there are negative societal impacts, the authors could also discuss possible mitigation strategies (e.g., gated release of models, providing defenses in addition to attacks, mechanisms for monitoring misuse, mechanisms to monitor how a system learns from feedback over time, improving the efficiency and accessibility of ML).
    \end{itemize}
    
\item {\bf Safeguards}
    \item[] Question: Does the paper describe safeguards that have been put in place for responsible release of data or models that have a high risk for misuse (e.g., pretrained language models, image generators, or scraped datasets)?
    \item[] Answer: \answerNA{} 
    \item[] Justification: 
    \item[] Guidelines:
    \begin{itemize}
        \item The answer NA means that the paper poses no such risks.
        \item Released models that have a high risk for misuse or dual-use should be released with necessary safeguards to allow for controlled use of the model, for example by requiring that users adhere to usage guidelines or restrictions to access the model or implementing safety filters. 
        \item Datasets that have been scraped from the Internet could pose safety risks. The authors should describe how they avoided releasing unsafe images.
        \item We recognize that providing effective safeguards is challenging, and many papers do not require this, but we encourage authors to take this into account and make a best faith effort.
    \end{itemize}

\item {\bf Licenses for existing assets}
    \item[] Question: Are the creators or original owners of assets (e.g., code, data, models), used in the paper, properly credited and are the license and terms of use explicitly mentioned and properly respected?
    \item[] Answer: \answerYes{} 
    \item[] Justification: We adequately cite all of the used datasets and architectures.
    \item[] Guidelines:
    \begin{itemize}
        \item The answer NA means that the paper does not use existing assets.
        \item The authors should cite the original paper that produced the code package or dataset.
        \item The authors should state which version of the asset is used and, if possible, include a URL.
        \item The name of the license (e.g., CC-BY 4.0) should be included for each asset.
        \item For scraped data from a particular source (e.g., website), the copyright and terms of service of that source should be provided.
        \item If assets are released, the license, copyright information, and terms of use in the package should be provided. For popular datasets, \url{paperswithcode.com/datasets} has curated licenses for some datasets. Their licensing guide can help determine the license of a dataset.
        \item For existing datasets that are re-packaged, both the original license and the license of the derived asset (if it has changed) should be provided.
        \item If this information is not available online, the authors are encouraged to reach out to the asset's creators.
    \end{itemize}

\item {\bf New Assets}
    \item[] Question: Are new assets introduced in the paper well documented and is the documentation provided alongside the assets?
    \item[] Answer: \answerNA{} 
    \item[] Justification: While we provide a codebase that is extensible and modular enough to be reused by many researchers in the field, we are not releasing any libraries or datasets.  
    \item[] Guidelines:
    \begin{itemize}
        \item The answer NA means that the paper does not release new assets.
        \item Researchers should communicate the details of the dataset/code/model as part of their submissions via structured templates. This includes details about training, license, limitations, etc. 
        \item The paper should discuss whether and how consent was obtained from people whose asset is used.
        \item At submission time, remember to anonymize your assets (if applicable). You can either create an anonymized URL or include an anonymized zip file.
    \end{itemize}

\item {\bf Crowdsourcing and Research with Human Subjects}
    \item[] Question: For crowdsourcing experiments and research with human subjects, does the paper include the full text of instructions given to participants and screenshots, if applicable, as well as details about compensation (if any)? 
    \item[] Answer: \answerNA{} 
    \item[] Justification: 
    \item[] Guidelines:
    \begin{itemize}
        \item The answer NA means that the paper does not involve crowdsourcing nor research with human subjects.
        \item Including this information in the supplemental material is fine, but if the main contribution of the paper involves human subjects, then as much detail as possible should be included in the main paper. 
        \item According to the NeurIPS Code of Ethics, workers involved in data collection, curation, or other labor should be paid at least the minimum wage in the country of the data collector. 
    \end{itemize}

\item {\bf Institutional Review Board (IRB) Approvals or Equivalent for Research with Human Subjects}
    \item[] Question: Does the paper describe potential risks incurred by study participants, whether such risks were disclosed to the subjects, and whether Institutional Review Board (IRB) approvals (or an equivalent approval/review based on the requirements of your country or institution) were obtained?
    \item[] Answer: \answerNA{} 
    \item[] Justification: 
    \item[] Guidelines:
    \begin{itemize}
        \item The answer NA means that the paper does not involve crowdsourcing nor research with human subjects.
        \item Depending on the country in which research is conducted, IRB approval (or equivalent) may be required for any human subjects research. If you obtained IRB approval, you should clearly state this in the paper. 
        \item We recognize that the procedures for this may vary significantly between institutions and locations, and we expect authors to adhere to the NeurIPS Code of Ethics and the guidelines for their institution. 
        \item For initial submissions, do not include any information that would break anonymity (if applicable), such as the institution conducting the review.
    \end{itemize}

\end{enumerate}

\end{document}